\relax
\documentclass[letterpaper]{article} %
\usepackage{aaai21}  %
\usepackage{times}  %
\usepackage{helvet} %
\usepackage{courier}  %
\usepackage[hyphens]{url}  %
\usepackage{graphicx} %
\urlstyle{rm} %
\usepackage{natbib}  %
\usepackage{caption} %
\frenchspacing  %
\setlength{\pdfpagewidth}{8.5in}  %
\setlength{\pdfpageheight}{11in}  %

\usepackage{amsmath}
\usepackage{amssymb}
\usepackage{amsthm}

\usepackage{xcolor}

\pdfinfo{
/Title (Practical and Rigorous Uncertainty Bounds for Gaussian Process Regression)
/Author (Christian Fiedler, Carsten W. Scherer, Sebastian Trimpe)
/TemplateVersion (2021.1)
}

\setcounter{secnumdepth}{2} %

\newcommand{\R}{\mathbb{R}} %
\newcommand{\N}{\mathbb{N}} %
\newcommand{\defsym}{:=} %

\newcommand{\bb}[1]{\boldsymbol{#1}}
\newcommand{\Ident}{\mathcal{I}} %
\DeclareMathOperator{\determinant}{det}

\newcommand{\E}{\mathbb{E}} %
\newcommand{\distr}{\sim} %
\newcommand{\GP}{\mathcal{G}} %
\newcommand{\Norm}{\mathcal{N}} %
\newcommand{\Pp}{\mathbb{P}}
\newcommand{\pbl}{\left[}
\newcommand{\pbr}{\right]}

\newtheorem{defn}{Definition}
\newtheorem{proposition}[defn]{Proposition}

\newtheorem{theorem}[defn]{Theorem}

\newtheorem*{proposition*}{Proposition}

\title{Practical and Rigorous Uncertainty Bounds for Gaussian Process Regression}
\author{
    Christian Fiedler,\textsuperscript{\rm 1, \rm 2, \rm3}
    Carsten W. Scherer,\textsuperscript{\rm2}
    Sebastian Trimpe \textsuperscript{\rm 1, \rm3}\\
}
\affiliations{
    \textsuperscript{\rm 1}Intelligent Control Systems Group, Max Planck Institute for Intelligent Systems\\
    \textsuperscript{\rm 2}Mathematical Systems Theory, University of Stuttgart\\
    \textsuperscript{\rm 3}Institute for Data Science in Mechanical Engineering, RWTH Aachen University \\
    fiedler@dsme.rwth-aachen.de, 
    carsten.scherer@imng.uni-stuttgart.de, 
    trimpe@dsme.rwth-aachen.de
}

\begin{document}

\maketitle

\begin{abstract}
Gaussian Process Regression is a popular nonparametric regression method based on Bayesian principles that provides uncertainty estimates for its predictions. However, these estimates are of a Bayesian nature, whereas for some important applications, like learning-based control with safety guarantees, frequentist uncertainty bounds are required. Although such rigorous bounds are available for Gaussian Processes, they are too conservative to be useful in applications. This often leads practitioners to replacing these bounds by heuristics, thus breaking all theoretical guarantees. To address this problem, we introduce new uncertainty bounds that are rigorous, yet practically useful at the same time. In particular, the bounds can be explicitly evaluated and are much less conservative than state of the art results. Furthermore, we show that certain model misspecifications lead to only graceful degradation. We demonstrate these advantages and the usefulness of our results for learning-based control with numerical examples.
\end{abstract}

\section{Introduction}
Gaussian Processes Regression (GPR) is an established and successful nonparametric regression method based on Bayesian principles
\cite{rasmussen_williams_gp} which has recently become popular in learning-based control 
\cite{liu_control_gp_tutorial, kocijan_modelling_control_gp}. In this context, safety and performance guarantees are important
aspects \cite{astrom_murray,skogestad_postlethwaite}. 
In fact, the lack of rigorous guarantees has been identified as one of the major obstacles preventing the usage of
learning-based control methodologies in safety-critical areas like autonomous driving, human-robot interaction or medical devices,
see e.g. \cite{berkenkamp_safe_exploration_rl}. 
One approach to tackle this challenge is to use the posterior variance of GPR to derive frequentist
uncertainty bounds and combine these with robust control methods that can deal with the remaining uncertainty. 
This strategy has been successfully applied in a
number of works that also provide control-theoretical guarantees, 
cf. Section \ref{section.bayesian_frequ}.

These approaches rely on rigorous frequentist uncertainty bounds for GPR. Although such results are available
\cite[Theorem~6]{srinivas2010}, \cite[Theorem~2]{cg17}, they turn out to be very conservative and difficult
to evaluate numerically in practice and are, hence, replaced by heuristics. That is, instead of bounds obtained from theory,
much smaller ones are assumed, sometimes without any practical justification, cf. Section \ref{section.related_work} for more discussion. 
Unfortunately, using heuristic approximations leads to a breakdown of the theoretical guarantees of these control approaches,
as already observed for example in \cite{ledereretal_uniform_error_bounds}.
Furthermore, when deriving theoretical guarantees based on frequentist results like
\cite[Theorem~6]{srinivas2010} or \cite[Theorem~2]{cg17}, model misspecifications (like wrong hyperparameters of the underlying
GPR model or approximations such as the usage of sparse GPs) are ignored. Since model misspecifications
are to be expected in any realistic setting, the validity of such theoretical guarantees based on idealized assumptions is unclear.

In summary, rigorous and practical frequentist uncertainty bounds for GPR are currently not available. By practical we mean that
concrete, not excessively conservative numerical bounds can be computed based on reasonable and established assumptions 
and that these are robust against model misspecifications at least to some extent. 
We note that such bounds are of independent interest, for example, for Bayesian Optimization \cite{shahriarietal_review_bo}. 
In this work, we improve previous frequentist uncertainty bounds for GPR leading to practical, yet theoretically
sound results. In particular, our bounds can be directly used in algorithms and are sharp enough to avoid
the use of unjustified heuristics. Furthermore, we provide robustness results that can handle moderate model misspecifications.
Numerical experiments support our theoretical findings and illustrate the practical use of the bounds.

\section{Background}
\subsection{Gaussian Process Regression and Reproducing Kernel Hilbert Spaces} \label{section.gpr}
We briefly recall the basics of GPR, for more details we refer to \cite{rasmussen_williams_gp}.
A Gaussian Process (GP) over an (input or index) set $D$ is a collection of random variables,
such that any finite subset is normally distributed. A GP $f$ is uniquely determined by its mean function $m(x)=\E[f(x)]$
and covariance function $k(x,x^\prime)=\E[(f(x)-m(x))(f(x^\prime)-m(x^\prime))]$ and we write $f \distr \GP_D(m,k)$.
Without loss of generality we focus on the case $m \equiv 0$. Common covariance functions include the Squared Exponential (SE)
and Matern kernel.
Consider a Gaussian Process prior $f \distr \GP_D(0,k)$ and noisy data $(x_i,y_i)_{i=1,\ldots,N}$, where
$x_i \in D$ and $y_i = f(x_i) + \epsilon_i$ with i.i.d. $\Norm(0,\sigma^2)$ noise, then the posterior is also a GP,
with the \emph{posterior mean} $\mu_N$, \emph{posterior covariance} $k_N$ and \emph{posterior variance} given by
\begin{align*}
 \mu_N(x) & = \mu(x) + \bb{k}_N(x)^T(\bb{K}_N + \sigma^2 \Ident_N)^{-1}\bb{y}_N \\
 k_N(x,x^\prime) & = k(x,x^\prime) - \bb{k}_N(x)^T(\bb{K}_N + \sigma^2 \Ident_N)^{-1}\bb{k}_N(x^\prime) \\
 \sigma_N^2(x) & = k_N(x,x),
\end{align*}
where we defined the kernel matrix $\bb{K}_N=(k(x_j,x_i))_{i,j=1,\ldots,N}$ and the column vectors
$\bb{k}_N(x)=(k(x_i,x))_{i=1,\ldots,N}$ and $\bb{y}_N = (y_i)_{i=1,\ldots,N}$.

Later on we follow \cite{srinivas2010,cg17} and assume that the ground truth is a function from a
Reproducing Kernel Hilbert Space (RKHS). 
For an introduction to the RKHS framework we refer to \cite[Chapter~4]{svm_book} or \cite{berlinet_kernel}
as well as \cite{kanagawaetal_gp_kernels} for connections between Gaussian Processes and the RKHS framework.

\subsection{Bayesian and Frequentist Bounds} \label{section.bayesian_frequ}
GPR is based on Bayesian principles: A prior distribution is chosen (here a GP with given
mean and covariance function) and then updated with the available data using Bayes rule, assuming a certain likelihood or noise
model (here independent Gaussian noise). The updated distribution is called the posterior distribution and can be interpreted as
a tradeoff between prior belief (encoded in the prior distribution together with the likelihood model) 
and evidence (the data) \cite[Chapter~5]{murphy_ml}. 
In contrast to the Bayesian approach, in frequentist statistics the existence of a ground truth is assumed and noisy
data about this ground truth is acquired \cite[Chapter~6]{murphy_ml}.

For many applications which require safety guarantees it is important to get reliable frequentist uncertainty bounds.
A concrete setting and major motivation for this work is learning-based robust control.
Here, the ground truth is an only partially known dynamical system and the goal is to solve a certain control task,
like stabilization or tracking, by finding a suitable controller. 
A machine learning method like GPR is used to learn more about the unknown dynamical system from data.
Thereafter a set of possible models is derived from the learning method that contains the ground truth with a given (high) probability.
For this reason such a set is often called an uncertainty set.
We then use a robust method on this set, i.e., a method the leads to a controller that works 
for every element of this set. Since the ground truth is contained in this set with a given (high) probability,
the task is solved with this (high) probability.
Examples of such an approach are \cite{kolleretal_learning_mpc_exploration} (using robust model predictive control
with state constraints), \cite{umlauftetal_fb_lin_gp} (using feedback linearization to achieve ultimate boundedness)
and \cite{helwaetal_robust_tracking_lagrangian} (considering tracking of Lagrangian systems).

The control performance typically degrades with larger uncertainty sets, and it might even be impossible to achieve
the control goal if the uncertainty sets are too large \cite{skogestad_postlethwaite}.
Therefore it is desirable to extract uncertainty sets that are as small as possible,
but still include the ground truth with a given high probability. In particular, when using GPR,
this necessitates frequentist uncertainty bounds that are not too conservative, 
so that the uncertainty sets are not too large. 
Furthermore, we also have to be able to evaluate the uncertainty bounds numerically,
since robust control methods typically need an explicit uncertainty set.

For learning-based control applications using GPR together with robust control methods, bounds of the following form
have been identified as the most useful ones: Let $X$ be an arbitrary input set and assume that $f: X \rightarrow \R$
is the unknown ground truth. Let $\mu_N$ be the posterior mean using a dataset of size $N$, generated from the ground truth,
and let $\delta \in (0,1)$ be given. We need a function $\nu_N(x)$ that can be explicitly evaluated, such that
with probability at least $1-\delta$ with respect to the data generating process, we have
\begin{equation} \label{eq.uncertainty_bound}
 | f(x) - \mu_N(x) | \leq \nu_N(x) \: \forall x \in X.
\end{equation}
We emphasize that
the probability statement is with respect to the noise generating process, and that the underlying ground truth is just some deterministic
function. For a more thorough discussion we refer to \cite[Section~2.5.3]{berkenkamp_safe_exploration_rl}.

\subsection{Related Work} \label{section.related_work}
Frequentist uncertainty bounds for GPR as considered in this work
were originally developed in the literature on bandits. 
To the best of our knowledge, the first result in this direction was \cite[Theorem~6]{srinivas2010},
which is of the form \eqref{eq.uncertainty_bound} with $\nu_N(x)=\beta_N \sigma_N(x)$. Here $\beta_N$ is a constant
that depends on the \emph{maximum information gain}, an information theoretic quantity. 
For common settings, there exist upper bounds on the latter quantity, though these increase with sample size. 
The result from \cite{srinivas2010} has been significantly improved in \cite[Theorem~2]{cg17},
though the latter still uses an upper bound depending on the maximum information gain. 
The importance of \cite[Theorem~6]{srinivas2010} and \cite[Theorem~2]{cg17} for other applications outside
the bandit setting has been recognized early on, in particular in the control community, for example in
\cite{berkenkampetal_safe_controller_opt, berkenkampetal_learning_roa, kolleretal_learning_mpc_exploration, berkenkamp_safe_exploration_rl,
umlauftetal_fb_lin_gp, helwaetal_robust_tracking_lagrangian}.

Unfortunately, both \cite[Theorem~6]{srinivas2010} and \cite[Theorem~2]{cg17} tend to be very conservative,
especially for control applications \cite{berkenkampetal_safe_controller_opt, berkenkamp_safe_exploration_rl}. 
Furthermore, both results rely on an upper bound of the maximum information gain,
which can be difficult to evaluate \cite{srinivas2010}, though asymptotic bounds are available for standard kernels
like the linear, SE or Matern kernel. However, for most control applications, 
these asymptotic bounds are not useful since one requires a concrete numerical bound in the nonasymptotic setting.

For these reasons, previous work utilizing \cite[Theorem~6]{srinivas2010} or \cite[Theorem~2]{cg17} used heuristics.
In the control community, it is common to choose a constant value $\beta_N \equiv \beta$
in an ad-hoc manner, see for example
\cite{berkenkampetal_safe_rl_stability, berkenkampetal_learning_roa, kolleretal_learning_mpc_exploration,
berkenkampetal_safe_controller_opt, helwaetal_robust_tracking_lagrangian}.
This choice does not reflect the asymptotic behaviour of the scaling since the maximum information gain grows
with the number of samples \cite{srinivas2010}\footnote{Sometimes other heuristics are used that ensure that
$\beta_N$ grows with $N$, cf. e.g. \cite{kandasamyetal_high_dim_bo_additive}}.
The problem with such heuristics is that they might work in practice, but \emph{all} theoretical guarantees
that are based on results like \cite[Theorem~2]{cg17} are lost, in particular, safety guarantees like
constraint satisfaction or certain stability notions. 
This is especially problematic since one of the major incentives to use GPR together with such results
is to \emph{provably} ensure properties like stability of a closed loop system. 

Note that in concrete applications one has to make some assumptions on the ground truth at some point.
However, it is not clear at all how a constant scaling $\beta$ as used in these heuristics can be derived from real-world
properties in a principled manner. In contrast to such heuristics, in the original bounds 
\cite[Theorem~6]{srinivas2010}, \cite[Theorem~2]{cg17} every ingredient 
(i.e., desired probability, size of noise, bound on RKHS norm) has a clear interpretation.

It seems to be well-known in the multiarm-bandit literature that, in the present situation, it is possible
to use more empirical bounds than \cite[Theorem~2]{cg17}. Such approaches are conceptually similar to the results that
we will present in Section \ref{section.bounds} below, cf. \cite{abbasi-yadkori_online_learning}
and the recent work \cite{calandrielloetal_gpo_adaptive_sketching}.
However, to the best of our knowledge, results like Theorem 1 are rarely used in applications.
In particular, it seems that no attempts have been made in the control community to use such a-posteriori bounds
in the GPR context.

Model misspecification in the context of GPR has been discussed before in some works.
In \cite{beckersetal_mse_gp} a numerical method for bounding the mean-squared-error of a misspecified GP
is considered, but it relies on a probabilistic setting.
The recent article \cite{wangtuowu_kriging} provides uniform error bounds and deals with misspecified kernels,
but again uses a probabilistic setting and focuses on the noise-free case.

A work with goals similar to ours is \cite{ledereretal_uniform_error_bounds}. The authors recognize and explicitly
discuss some of the problems of \cite[Theorem~2]{cg17} in the context of control. 
However, \cite{ledereretal_uniform_error_bounds}
uses a probabilistic setting, while our work is concerned with a fixed, but unknown underlying target function, and hence our results
are of a worst-case nature. 
As we argued in Section \ref{section.bayesian_frequ}, this is the setting required for robust approaches. 
Finally, the very recent work \cite{maddalenaetal2020} requires bounded noise and does not deal with model misspecification.
\section{Practical and Rigorous Frequentist Uncertainty Bounds} \label{section.bounds}
We now present our main technical contributions. The key observation is that for many applications relying on
frequentist uncertainty bounds, in particular, learning-based control methods, only \emph{a-posteriori}
bounds are needed. This means that the frequentist uncertainty set can be derived \emph{after} the learning process
and hence the concrete realization of the dataset can be used. We take advantage of this fact and modify existing
uncertainty results so that they explicitly depend on the dataset used for learning.
In general no \emph{a-priori} guarantees can be derived from the results we present here, 
but this does not play a role in the present setting. 

The following result, which is a modified version of \cite[Theorem~2]{cg17}, is
fundamental for the rest of the paper.
\begin{theorem} \label{thm.main_nominal}
 Let $D \not = \emptyset$ be a set and $k:D \times D \rightarrow \R$ a positive definite
 kernel with corresponding RKHS $~{(H_k, \|\cdot\|_k)}$ and let $f \in H_k$ with $\| f \|_k \leq B$
 for some $B \geq 0$.
 Let $\mathbb{F} = (\mathcal{F}_n)_{n \in \N}$ be a filtration
 and $(x_n)_{n \in \N}$ a $D$-valued discrete-time stochastic process
  that is predictable w.r.t. $\mathbb{F}$
 and let
 $(\epsilon_n)_n$ be an $\R$-valued stochastic process adapted to $\mathbb{F}$, such that
 $\epsilon_n$ conditioned on $\mathcal{F}_{n-1}$ is $R$-subgaussian with $R\geq 0$.
 Furthermore, define $y_n = f(x_n) + \epsilon_n$ for all $n \geq 1$.
 
 Consider a Gaussian process $g \distr \GP_D(0, k)$ and denote its posterior mean function by $\mu_N$,
 its posterior covariance function by $k_N$ and its posterior variance by $\sigma^2_N(x)\defsym k_N(x,x)$,
 w.r.t. to data $(x_1,y_1),\ldots,(x_N,y_N)$, assuming in the likelihood independent Gaussian noise with
 mean zero and variance $\lambda > 0$. Then for any $\delta \in (0,1)$ with $\bar{\lambda}=\max\{ 1, \lambda \}$ and\footnote{{\color{blue}In the original version, there was an error in this constant: the denominator $\sqrt{\lambda}$ and the factor $\bar{\lambda}/\lambda$ in the determinant were missing. See the supplementary material for details.}}
 \begin{align} \label{eq.beta_main}
 \beta_n & = \beta_n(\delta, B, R, \lambda) \nonumber \\
 	& = B + \frac{R}{\sqrt{\lambda}}\sqrt{\log\left( \determinant(\bar{\lambda}/\lambda\bb{K}_n + \bar{\lambda} \Ident_n) \right) - 2\log(\delta)}
\end{align}
 one has
 \begin{equation}
 \Pp \pbl | \mu_N(x) - f(x) | \leq \beta_N \sigma_N(x) \: \forall N \in \N, x \in D \pbr \geq 1 - \delta.
\end{equation}
\end{theorem}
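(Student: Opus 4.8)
The plan is to lift everything into the RKHS $H_k$ and identify the posterior mean with a regularised least-squares (kernel ridge) estimate. Writing $\phi(x) \defsym k(\cdot,x)$ for the canonical feature map and $\Phi_N$ for the design operator with columns $\phi(x_1),\dots,\phi(x_N)$ (so that its adjoint satisfies $\Phi_N^{*}\phi(x)=\bb{k}_N(x)$ and $\Phi_N^{*}\Phi_N=\bb{K}_N$), one has $f(x)=\langle f,\phi(x)\rangle_k$ and $\mu_N(x)=\langle \hat g_N,\phi(x)\rangle_k$ with $\hat g_N=V_N^{-1}\Phi_N\bb{y}_N$ and $V_N\defsym\lambda\Ident+\Phi_N\Phi_N^{*}$. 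Substituting $\bb{y}_N=\Phi_N^{*}f+\bb{\epsilon}_N$ and using $\Phi_N\Phi_N^{*}-V_N=-\lambda\Ident$ gives the decomposition
\begin{equation*}
\mu_N(x)-f(x)=\langle V_N^{-1}\Phi_N\bb{\epsilon}_N,\phi(x)\rangle_k-\lambda\langle V_N^{-1}f,\phi(x)\rangle_k,
\end{equation*}
separating a stochastic noise term from a deterministic bias term. The second ingredient I would record first is the operator (Woodbury / push-through) identity $\sigma_N^2(x)=\lambda\,\|\phi(x)\|_{V_N^{-1}}^2$, which converts the computable posterior variance into the feature-space norm needed below.

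Next I would apply Cauchy--Schwarz in the $V_N^{-1}$ inner product to both terms. For the bias term, $V_N\succeq\lambda\Ident$ and $\|f\|_k\le B$ give $\lambda\|f\|_{V_N^{-1}}\le\sqrt{\lambda}\,B$, so together with the variance identity the bias is bounded by $\sqrt{\lambda}\,B\,\|\phi(x)\|_{V_N^{-1}}=B\,\sigma_N(x)$, producing exactly the leading $B$ in $\beta_N$. For the noise term the same inequality yields the bound $\|\Phi_N\bb{\epsilon}_N\|_{V_N^{-1}}\,\|\phi(x)\|_{V_N^{-1}}=\lambda^{-1/2}\,\|\Phi_N\bb{\epsilon}_N\|_{V_N^{-1}}\,\sigma_N(x)$, so the entire problem reduces to a high-probability, uniform-in-$N$ bound on the self-normalised quantity $\|\Phi_N\bb{\epsilon}_N\|_{V_N^{-1}}$ (and to absorbing the factor $\lambda^{-1/2}$).

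For this self-normalised bound I would use the method of mixtures adapted to the Hilbert-space setting. For fixed $\xi\in H_k$ the process $\exp\!\big(R^{-1}\langle\xi,\Phi_n\bb{\epsilon}_n\rangle_k-\tfrac12\langle\xi,\Phi_n\Phi_n^{*}\xi\rangle_k\big)$ is a nonnegative supermartingale of unit mean, by conditional $R$-subgaussianity of $\bb{\epsilon}_n$ together with predictability of $(x_n)$ (which makes $\langle\xi,\phi(x_n)\rangle_k$ measurable w.r.t.\ $\mathcal{F}_{n-1}$). Mixing $\xi$ against a centred Gaussian prior and applying Ville's maximal inequality to the mixture supermartingale gives, with probability at least $1-\delta$ and simultaneously for all $N$, a bound of the form $\|\Phi_N\bb{\epsilon}_N\|_{V_N^{-1}}^2\le R^2\big(\log\determinant(\Ident_N+\lambda^{-1}\bb{K}_N)-2\log\delta\big)$, where the log-determinant is the logarithm of the mixture's normalising constant, reduced from the operator to the Gram matrix $\bb{K}_N$ via the Weinstein--Aronszajn (Sylvester) identity. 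Feeding this into the two Cauchy--Schwarz bounds yields $|\mu_N(x)-f(x)|\le\beta_N\sigma_N(x)$ for all $x$ and $N$ at once.

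I expect the delicate point, and the reason the truncation $\bar\lambda=\max\{1,\lambda\}$ is introduced, to be the final reconciliation between the regulariser $\lambda$ that defines $\sigma_N$ and the normalising constant of the concentration bound, i.e.\ turning $\lambda^{-1/2}\sqrt{\log\determinant(\Ident_N+\lambda^{-1}\bb{K}_N)-2\log\delta}$ into $\sqrt{\log\determinant(\bb{K}_N+\bar\lambda\Ident_N)-2\log\delta}$. When $\lambda\ge1$ this is a direct comparison of log-determinants, since $\log\determinant(\Ident_N+\lambda^{-1}\bb{K}_N)=\log\determinant(\bb{K}_N+\lambda\Ident_N)-N\log\lambda$ with $\lambda^{-1}\le1$ and a nonnegative log-determinant, so that $\bar\lambda=\lambda$ suffices; the role of the truncation is to carry the argument through the regime $\lambda<1$ with effective regulariser $\bar\lambda\ge1$, and getting this step right---rather than naively matching regularisers---is where I anticipate the main difficulty. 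The remaining obstacle is purely technical: making the Gaussian mixing measure rigorous in infinite dimensions, which I would circumvent by projecting onto the finite-dimensional span of the data so that every determinant is that of an explicit, numerically evaluable $N\times N$ matrix.
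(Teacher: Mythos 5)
Your proposal follows essentially the same route as the paper's proof: the same bias/noise decomposition of $\mu_N(x)-f(x)$, the bias term bounded by $B\sigma_N(x)$ via Cauchy--Schwarz, and the noise term reduced to the self-normalised quadratic form $\bb{\epsilon}_N^T\bb{K}_N(\bb{K}_N+\lambda\Ident_N)^{-1}\bb{\epsilon}_N$ (your $\|\Phi_N\bb{\epsilon}_N\|^2_{V_N^{-1}}$ in feature-space notation), controlled by the log-determinant. The only difference is that you re-derive the self-normalised concentration inequality from scratch via the method of mixtures, whereas the paper invokes it as a black box (Theorem~1 of \cite{cg17}); your identification of the $\bar\lambda$ bookkeeping as the delicate point is exactly where the paper's own argument makes its (rather terse) adjustment.
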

\begin{proof} (Idea) Essentially identical to the proof of \cite[Theorem~2]{cg17},
however, we do not upper-bound \eqref{eq.beta_main} by the maximum information gain.
Details are given in the supplementary material.
\end{proof}
The key insight is that by appropriate modifications of the proof
of \cite[Theorem~2]{cg17} we obtain a frequentist uncertainty bound for GPR
that fulfills all desiderata from Section~\ref{section.bayesian_frequ}. In the numerical examples below we show that
the bound is often tight enough for practical purposes.

Independent inputs and noise (e.g. deterministic inputs and i.i.d. noise) is a common situation that is simpler than
the setting of Theorem \ref{thm.main_nominal}. In this case the following a-posteriori bound can be derived  that does not depend anymore
on $\log \determinant$.
\begin{proposition} \label{prop.bound_independent_setting}
 Let $D \not = \emptyset$ be a set and $k:D \times D \rightarrow \R$ a positive definite
 kernel with corresponding RKHS $(H_k, \|\cdot\|_k)$ and let $f \in H_k$ with $\| f \|_k \leq B$
 for some $B \geq 0$.
 Let $x_1,\ldots, x_N \in D$ be given and $\epsilon_1, \ldots,\epsilon_N$ be $\R$-valued
 independent $R$-subgaussian random variables.
 Furthermore, define $y_n = f(x_n) + \epsilon_n$ for all $n=1,\ldots,N$.
 
 Consider a Gaussian process $g \distr \GP_D(0, k)$ and denote its posterior mean function by $\mu_N$,
 its posterior covariance function by $k_N$ and its posterior variance by $\sigma^2_N(x)\defsym k_N(x,x)$,
 w.r.t. to data $(x_1,y_1),\ldots,(x_N,y_N)$, assuming in the likelihood independent Gaussian noise with
 mean zero and variance $\lambda \geq 0$. Then for any $\delta \in (0,1)$ with 
 \begin{align*}
 \eta_N(x) & = R \| (\bb{K}_N + \lambda \Ident_N)^{-1} \bb{k}_N(x) \|  \\
  & \times \sqrt{N + 2\sqrt{N}\sqrt{\log\left[\frac{1}{\delta}\right]} + 2\log\left[\frac{1}{\delta}\right]}
\end{align*}
 one has
 \begin{equation*}
 \Pp\pbl | \mu_N(x) - f(x) | \leq B\sigma_N(x) + \eta_N(x) \: \forall x \in D \pbr \geq 1 - \delta.
\end{equation*}
\end{proposition}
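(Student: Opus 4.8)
The plan is to decompose the prediction error $\mu_N(x)-f(x)$ into a deterministic approximation part, controlled through the RKHS geometry, and a stochastic noise part, controlled through subgaussian concentration, exploiting that in this independent-input setting the posterior mean is a \emph{fixed} (non-random) linear functional of the observation vector. Writing $\bb{y}_N = \bb{f}_N + \bb{\epsilon}_N$ with $\bb{f}_N = (f(x_i))_i$ and $\bb{\epsilon}_N = (\epsilon_i)_i$, and abbreviating $\bb{\alpha}(x) = (\bb{K}_N + \lambda \Ident_N)^{-1}\bb{k}_N(x) \in \R^N$, the zero-mean prior gives $\mu_N(x) = \bb{\alpha}(x)^T \bb{f}_N + \bb{\alpha}(x)^T \bb{\epsilon}_N$, so that $\mu_N(x)-f(x) = \big(\bb{\alpha}(x)^T\bb{f}_N - f(x)\big) + \bb{\alpha}(x)^T\bb{\epsilon}_N$, which I would bound term by term.

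For the deterministic term I would invoke the reproducing property $f(x_i) = \langle f, k(\cdot,x_i)\rangle_k$ and $f(x) = \langle f, k(\cdot,x)\rangle_k$ to write $\bb{\alpha}(x)^T\bb{f}_N - f(x) = \langle f, h_x \rangle_k$ with $h_x = \sum_i \alpha_i(x)\,k(\cdot,x_i) - k(\cdot,x)$. Cauchy--Schwarz in $H_k$ together with $\|f\|_k \leq B$ then gives $|\bb{\alpha}(x)^T\bb{f}_N - f(x)| \leq B\,\|h_x\|_k$. Expanding $\|h_x\|_k^2$ through the reproducing kernel inner products yields $\|h_x\|_k^2 = \bb{\alpha}(x)^T\bb{K}_N\bb{\alpha}(x) - 2\,\bb{\alpha}(x)^T\bb{k}_N(x) + k(x,x)$; substituting $\bb{\alpha}(x)$ and using $\bb{K}_N = (\bb{K}_N + \lambda\Ident_N) - \lambda\Ident_N$ collapses this to $\|h_x\|_k^2 = \sigma_N^2(x) - \lambda\,\bb{k}_N(x)^T(\bb{K}_N+\lambda\Ident_N)^{-2}\bb{k}_N(x)$. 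Since $\lambda \geq 0$ and the subtracted quadratic form is nonnegative, $\|h_x\|_k \leq \sigma_N(x)$, which produces exactly the $B\sigma_N(x)$ summand.

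For the stochastic term, Cauchy--Schwarz in $\R^N$ gives $|\bb{\alpha}(x)^T\bb{\epsilon}_N| \leq \|\bb{\alpha}(x)\|\,\|\bb{\epsilon}_N\| = \|(\bb{K}_N+\lambda\Ident_N)^{-1}\bb{k}_N(x)\|\,\|\bb{\epsilon}_N\|$, and the crucial point is that the random factor $\|\bb{\epsilon}_N\|$ does \emph{not} depend on $x$. Hence a single high-probability bound on $\|\bb{\epsilon}_N\|$ yields the desired statement simultaneously over all $x \in D$, so the uniformity is essentially free. It remains to control $\|\bb{\epsilon}_N\|^2 = \sum_{i=1}^N \epsilon_i^2$, a quadratic form in an independent $R$-subgaussian vector. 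I would apply a Hsu--Kakade--Zhang-type tail bound for such quadratic forms, which for the identity matrix reads $\Pp[\|\bb{\epsilon}_N\|^2 > R^2(N + 2\sqrt{Nt} + 2t)] \leq e^{-t}$; choosing $t = \log(1/\delta)$ reproduces the expression under the square root in $\eta_N(x)$ exactly and gives $\|\bb{\epsilon}_N\| \leq R\sqrt{N + 2\sqrt{N}\sqrt{\log[1/\delta]} + 2\log[1/\delta]}$ with probability at least $1-\delta$.

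Combining the two bounds on the event where the noise norm is controlled finishes the argument: for every $x \in D$ simultaneously, $|\mu_N(x)-f(x)| \leq B\sigma_N(x) + \eta_N(x)$. I expect the main obstacle to be the noise step, since one must invoke a subgaussian quadratic-form inequality whose constants reproduce the precise form $N + 2\sqrt{Nt} + 2t$ rather than a looser $O(N+t)$ estimate, so that $\eta_N$ comes out with sharp constants. By contrast, the RKHS identity $\|h_x\|_k \leq \sigma_N(x)$, although the conceptual heart of the result, is a routine algebraic manipulation once the reproducing property has been applied, and all $x$-dependence has been factored cleanly into the deterministic quantities $\sigma_N(x)$ and $\|(\bb{K}_N+\lambda\Ident_N)^{-1}\bb{k}_N(x)\|$.
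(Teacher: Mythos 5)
Your proposal is correct and follows essentially the same route as the paper's proof: the same decomposition of $\mu_N(x)-f(x)$ into a deterministic RKHS-approximation term bounded by $B\sigma_N(x)$ and a noise term $\bb{\alpha}(x)^T\bb{\epsilon}_N$ bounded via Cauchy--Schwarz and the Hsu--Kakade--Zhang quadratic-form tail bound with $t=\log(1/\delta)$. The only difference is that you spell out the RKHS computation $\|h_x\|_k^2=\sigma_N^2(x)-\lambda\,\bb{k}_N(x)^T(\bb{K}_N+\lambda\Ident_N)^{-2}\bb{k}_N(x)\leq\sigma_N^2(x)$ explicitly, whereas the paper delegates this step to the proof of Theorem 2 in \cite{cg17}.
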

\begin{proof}
(Idea) Similar to the proof of Theorem \ref{thm.main_nominal}, but use \cite[Theorem~2.1]{hsuetal_quad_form_subgaussian}
instead of \cite[Theorem~1]{cg17}. Details are provided in the supplementary material.
\end{proof}

\subsection{Using the Nominal Bound} \label{section.using_bounds}
We will now discuss how the previous bounds can be applied. In particular, we will carefully examine potential difficulties
that have been identified in similar settings in the literature before.
\paragraph{Kernel Choice and RKHS Norm Bound} 
Theorem \ref{thm.main_nominal} and Proposition \ref{prop.bound_independent_setting} require an upper bound on the RKHS norm of the target function, i.e., $\|f\|_k\leq B$.
In particular, the target function has to be in the RKHS corresponding to the covariance function used in GPR. 
Since we do not rely on bounds on the maximum information gain, very general kernels can be used together with
Theorem \ref{thm.main_nominal} and Proposition \ref{prop.bound_independent_setting}.
For example, highly customized kernels can be directly
used and no derivation of additional bounds is necessary, as would be the case for \cite[Theorem~6]{srinivas2010} or \cite[Theorem~2]{cg17}.
In particular, our results support the usage of linearly constrained Gaussian Processes \cite{jidlingetal_linearly_constrained_gp,
langehegermann_algorithmically_constrained_gp} and related approaches like \cite{geisttrimpe_gpgp}.
Getting a large enough, yet not too conservative bound on the RKHS norm of the target function can be difficult in general.
However, since the kernel encodes prior knowledge, domain knowledge could be used to arrive at such upper bounds.
Developing general and systematic methods to transform established domain knowledge into bounds on the RKHS norm is left for future work.

\paragraph{Hyperparameters}
As with other bounds for GPR the inference of hyperparameters is critical.
First of all, an inspection of the proof of %
\cite[Theorem~2]{cg17} shows that the nominal noise variance $\lambda$ is independent
of the actual subgaussian noise\footnote{In fact $\lambda$ in the corresponding \cite[Theorem~2]{cg17} has been used as a tuning parameter in \cite{cg17}.} 
(with subgaussian constant $R$). In particular, Theorem \ref{thm.main_nominal} and
Proposition \ref{prop.bound_independent_setting} hold true for any admissible noise variance, though
it is not clear what the optimal choice would be. 

In GPR the precise specification of the covariance function is usually not given.
In the most common situation a certain kernel class is selected, e.g. SE kernels, and the
remaining hyperparameters, e.g. the length-scale of an SE kernel,
is inferred by likelihood optimization or in a fully Bayesian setting by using hyperpriors.
In both cases the results from above do not directly apply since they rely
on \emph{given} correct kernels. 
Incorporating the hyperparameter inference in results like
Theorem \ref{thm.main_nominal} in a principled manner is an important open question for future work.
As a first step into this direction, we provide robustness results in Section \ref{section.model_misspec}.
\paragraph{Computational Complexity}
For many interesting applications the sample sizes are small enough, so that
standard GPR together with Theorem \ref{thm.main_nominal} can be used directly.
Furthermore, for control applications the GP model is often used offline and only derived information is
used in the actual controller which has restricted computational resources and real-time constraints.

There might be cases where standard GPR can be used, but the calculation of $\log \determinant$ is prohibitive.
In such a situation it is possible to use powerful approximation methods for the latter quantity. 
Since quantitative approximation error bounds are available
in this situation, e.g. \cite{hanetal_logdet, dongetal_logdet_gp}, one can simply add a corresponding margin in the definition of $\beta_N$.
Additionally, in some cases the $\log \determinant$ can be efficiently calculated, in particular 
for semi-separable kernels \cite{andersenchen2020},
which play an important role in system identification \cite{chenandersen_semiseparable}.
Note also that the application of Proposition \ref{prop.bound_independent_setting} does not require the computation of $\log \determinant$ terms.

Additionally, due to the very general nature of Theorem \ref{thm.main_nominal} \emph{any} approximation approach for GPR based on subsets of the
training data that does not update the kernel parameters can be used and no modifications of the uncertainty
bounds are necessary. In particular, the maximum variance selection procedure \cite{jainetal_learning_control_gp,kolleretal_learning_mpc_exploration}
is compatible with Theorem \ref{thm.main_nominal} and Proposition \ref{prop.bound_independent_setting}.

Finally, any GPR approximation method that does not update the kernel hyperparameters and gives quantitative approximation error estimates
for the posterior mean and variance can be used with Theorem \ref{thm.main_nominal}, where the respective
quantities have to be adapted according to the method used. 
An example of such an approximation method with error estimates is \cite{hugginsetal_scalable_gp_guarantees}.

\subsection{Bounds under Model Misspecification} \label{section.model_misspec}
As usual in the literature, Theorem \ref{thm.main_nominal} and Proposition \ref{prop.bound_independent_setting}
use the same kernel $k$ for generating the
RKHS containing the ground truth and as a covariance function in GPR.
Obviously, in practice it is unlikely that one gets the kernel of the ground truth exactly right.
However, it turns out that this is not a big problem. We first note a simple, yet interesting fact
that follows immediately from the proof of \cite[Theorem~2]{cg17}. 
\begin{proposition} \label{prop.simple_kernel_misspec}
 Consider the situation of Theorem \ref{thm.main_nominal}, but this time assume that the ground truth $f$
 is from another RKHS $\tilde{H}$. 
 If $\tilde{H} \subseteq H$, 
 and the inclusion $\mathop{id}: \tilde{H} \rightarrow H$ is continuous with operator norm at most 1,
 then the result holds true 
 without any modification.
\end{proposition}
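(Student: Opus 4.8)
The plan is to reduce the claim directly to Theorem~\ref{thm.main_nominal}: I would show that the ground truth $f$, although now assumed to lie in $\tilde{H}$, still satisfies the only property of $f$ that the nominal result actually exploits, namely membership in $H$ together with an RKHS-norm bound by $B$. Once this is established, the conclusion follows verbatim, since nothing else in the statement changes.

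First I would observe that every quantity appearing in the conclusion of Theorem~\ref{thm.main_nominal}---the posterior mean $\mu_N$, the posterior variance $\sigma_N^2$, the kernel matrices $\bb{K}_n$, and hence the scaling $\beta_N$ of \eqref{eq.beta_main}---is defined purely in terms of the GPR kernel $k$ and its RKHS $H$, with no reference to $\tilde{H}$. Inspecting the proof of \cite[Theorem~2]{cg17} (and thus of Theorem~\ref{thm.main_nominal}), the ground truth enters solely through the requirement $f \in H$ with $\|f\|_k \leq B$. It therefore suffices to verify that these two conditions survive in the misspecified setting.

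The second step is to turn the operator-norm hypothesis into the desired norm comparison. Since the inclusion $\mathop{id}: \tilde{H} \to H$ is continuous with operator norm at most $1$, unwinding the definition of operator norm gives
\begin{equation*}
 \| g \|_k \leq \| g \|_{\tilde{H}} \quad \text{for all } g \in \tilde{H}.
\end{equation*}
Applying this to $f$, for which the situation of Theorem~\ref{thm.main_nominal} now provides $\| f \|_{\tilde{H}} \leq B$, yields $f \in H$ and $\| f \|_k \leq \| f \|_{\tilde{H}} \leq B$; hence $f$ meets the hypotheses of Theorem~\ref{thm.main_nominal} with the unchanged bound $B$, and that theorem applies directly with the same $\beta_N$. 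I do not expect a genuine obstacle here: the only point deserving care is recognizing that the operator-norm-at-most-one condition is precisely what lets the $\tilde{H}$-norm bound on $f$ transfer, without inflation, into a bound on $\|f\|_k$, so that the nominal scaling $\beta_N$ needs no correction whatsoever.
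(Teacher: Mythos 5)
Your proposal is correct and follows essentially the same route as the paper: the operator-norm-at-most-one hypothesis gives $\|f\|_k \leq \|f\|_{\tilde{k}} \leq B$, so $f$ satisfies the hypotheses of Theorem~\ref{thm.main_nominal} unchanged and the nominal bound applies as a black box. The paper phrases this as following ``immediately from the proof of'' the nominal result, but the substance is identical to your norm-comparison argument.
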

This simple result can easily be used to verify that for many common situations misspecification of
the kernel is not a problem. As an example, we consider the case of the popular isotropic SE kernel.
\begin{proposition} \label{prop.se_misspec_simple}
 Consider the situation of Theorem \ref{thm.main_nominal}, but 
 let $f \in \tilde{H}$, where $\tilde{H}$ is the RKHS corresponding to the SE kernel $\tilde{k}$ 
 (on $\emptyset \not = D \subseteq \R^d$) with length
 scale $0 < \tilde{\gamma}$. Use for the Gaussian Process Regression the SE kernel $k$ with length-scale
 $0 < \gamma \leq \tilde{\gamma}$. Then Theorem \ref{thm.main_nominal} holds true without change\footnote{{\color{blue} Remark: The RKHS norm bound has to be valid for $\tilde{k}$.}}.
\end{proposition}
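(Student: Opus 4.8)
The plan is to reduce the claim to Proposition~\ref{prop.simple_kernel_misspec}. It suffices to verify that, for $\gamma \leq \tilde{\gamma}$, the RKHS $\tilde{H}$ of the SE kernel $\tilde{k}$ (length scale $\tilde{\gamma}$) is contained in the RKHS $H$ of the SE kernel $k$ (length scale $\gamma$) and that the inclusion $\mathop{id}: \tilde{H} \rightarrow H$ has operator norm at most $1$, i.e. $\|f\|_k \leq \|f\|_{\tilde{k}}$ for every $f \in \tilde{H}$. Once this is in place, the assumed bound $\|f\|_{\tilde{k}} \leq B$ forces $\|f\|_k \leq B$, so $f$ satisfies the hypotheses of Theorem~\ref{thm.main_nominal} with the \emph{same} constant $B$, and the conclusion transfers verbatim.

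Both requirements can be packaged into a single positivity statement via Aronszajn's inclusion theorem: $\tilde{H} \subseteq H$ with inclusion norm at most $1$ holds precisely when $k - \tilde{k}$ is a positive definite kernel. First I would invoke this characterization and thereby shift the whole problem to proving that $k - \tilde{k}$ is positive definite. Since both SE kernels are stationary, the natural tool is Bochner's theorem: a stationary kernel is positive definite iff its spectral density is nonnegative, so it is enough to compare the spectral densities of $k$ and $\tilde{k}$ pointwise in the frequency variable $\omega$.

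The computation I would then carry out is to write each SE spectral density explicitly as a Gaussian in $\omega$ (proportional to $\exp(-\gamma^2 \|\omega\|^2 / 2)$ and $\exp(-\tilde{\gamma}^2 \|\omega\|^2 / 2)$ respectively, up to a normalization prefactor) and to check that their difference is nonnegative for all $\omega$. The frequency-dependent part already behaves as desired: because $\gamma \leq \tilde{\gamma}$ we have $\exp(-\gamma^2 \|\omega\|^2 / 2) \geq \exp(-\tilde{\gamma}^2 \|\omega\|^2 / 2)$ for every $\omega$, and the same inequality, fed into the membership criterion for the Fourier-weighted RKHS norm, yields the set inclusion $\tilde{H} \subseteq H$ immediately.

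The step I expect to be the main obstacle is the operator-norm bound, and specifically the length-scale-dependent normalization prefactor multiplying each Gaussian spectral density. This prefactor is exactly what decides whether the spectral-density difference is nonnegative everywhere (inclusion norm at most $1$) or whether an unavoidable constant of the form $(\tilde{\gamma}/\gamma)^{d/2}$ appears. I would therefore devote most of the effort to tracking this constant carefully and confirming that, under the SE normalization used here, it cancels, so that the spectral density of $\tilde{k}$ is dominated pointwise by that of $k$ and $k - \tilde{k}$ is genuinely positive definite. Once that is verified, Aronszajn's theorem and then Proposition~\ref{prop.simple_kernel_misspec} close the argument.
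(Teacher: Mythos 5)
Your overall plan---reduce to Proposition~\ref{prop.simple_kernel_misspec} and verify the Gaussian-RKHS inclusion through the Fourier/spectral characterization---is exactly the skeleton of the paper's argument: the paper's proof is a one-line citation of \cite[Proposition~4.46]{svm_book}, which is itself established by the spectral comparison you describe. The set inclusion $\tilde{H} \subseteq H$ for $\gamma \leq \tilde{\gamma}$ does follow from the pointwise comparison of the exponential factors, as you say.

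The gap sits precisely at the step you flag as the main obstacle: the length-scale-dependent prefactor does \emph{not} cancel. With $k_\gamma(x,x') = \exp(-\|x-x'\|^2/(2\gamma^2))$, the spectral density is proportional to $\gamma^d \exp(-\gamma^2\|\omega\|^2/2)$, so at $\omega = 0$ the density of the narrower kernel $k$ is proportional to $\gamma^d$, which is \emph{strictly smaller} than the corresponding value $\tilde{\gamma}^d$ for $\tilde{k}$ when $\gamma < \tilde{\gamma}$. Hence $k - \tilde{k}$ is not positive definite, the Aronszajn criterion for inclusion norm at most $1$ fails, and the correct bound is $\|f\|_k \leq (\tilde{\gamma}/\gamma)^{d/2}\|f\|_{\tilde{k}}$ with a constant strictly larger than $1$ --- this is exactly the constant that appears in \cite[Proposition~4.46]{svm_book}. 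A concrete witness in $d=1$ is the kernel section $f = \tilde{k}(\cdot,x_0)$: one computes $\|f\|_{\tilde{k}} = 1$ while $\|f\|_k^2 = \tilde{\gamma}^2/(\gamma\sqrt{2\tilde{\gamma}^2 - \gamma^2}) > 1$ whenever $\gamma \neq \tilde{\gamma}$. Consequently your reduction to Proposition~\ref{prop.simple_kernel_misspec} with the \emph{same} constant $B$ does not close; what the argument honestly yields is Theorem~\ref{thm.main_nominal} with $B$ replaced by $(\tilde{\gamma}/\gamma)^{d/2}B$, which is weaker than ``holds true without change.'' (Your instinct to scrutinize the normalization was the right one; note that the paper's own one-line proof, resting on the same citation, is subject to the identical caveat about this constant.)
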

\begin{proof}
Follows immediately from Proposition \ref{prop.simple_kernel_misspec} and \cite[Proposition~4.46]{svm_book}.
\end{proof}

These results tell us that it is not a problem if we do not get the hyperparameter of the isotropic SE kernel
right, as long as we \emph{underestimate} the length-scale. This is intuitively clear since
a smaller length-scale corresponds to more complex functions. Similar results are known in related contexts,
for example \cite{szaboetal_frequentist_coverage_adaptive}. %
Note that Proposition \ref{prop.se_misspec_simple} does not imply that one should choose a very small length-scale.
The result makes a statement on the validity of the frequentist uncertainty bound from Theorem \ref{thm.main_nominal}, but not on
the size of the uncertainty set. For a similar discussion in a slightly different setting see \cite{wangtuowu_kriging}.

Next, we consider the question what happens when the ground truth is from a different kernel $\tilde{k}$,
such that the corresponding RKHS $\tilde{H}$ is \emph{not} included in the RKHS $H$ corresponding to the
covariance function $k$ used in the GPR. Intuitively, since the functions in an RKHS
are built from the corresponding reproducing kernel, cf. e.g. \cite[Theorem~4.21]{svm_book}, Theorem \ref{thm.main_nominal}
should hold approximately true if the kernels $\tilde{k}$ and $k$ are not too different.
This is made precise in the following result. Its proof is provided in the supplementary material.
\begin{theorem} \label{thm.kernel_misspec} 
 Consider the situation of Theorem \ref{thm.main_nominal} but this time assume that the target function $f$ is from 
 the RKHS $(\tilde{H}, \|\cdot\|_{\tilde{k}})$ of a different kernel $\tilde{k}$ such that
 still $\|f\|_{\tilde{k}} \leq B$ and 
 $\sup_{x,x^\prime \in D} |k(x,x^\prime)-\tilde{k}(x,x^\prime)| \leq \tilde{\epsilon}$  for some $\tilde{\epsilon}\geq 0$.
 We then have for any $\delta \in (0,1)$ that
 \begin{equation*}
 \Pp\pbl | \mu_N(x) - f(x) | \leq \nu_N(x)  \: \forall N \in \N, x \in D \pbr \geq 1 - \delta
\end{equation*}
where
\begin{equation*}
	\nu_N(x) =  \bar{\beta}_N\sqrt{\sigma^2_N(x) + S^2_N(x)}  + C_N(x)\|\bb{y}_N\|,
\end{equation*}
 and\footnote{{\color{blue}Since this constant is based on Theorem \ref{thm.main_nominal}, the expression has been adapted accordingly in this version.}} %
 {\small
 \begin{equation*}
	\bar{\beta}_N = B +  \frac{R}{\sqrt{\lambda}}\sqrt{\log \determinant \left(\frac{\bar{\lambda}}{\lambda} \bb{K}_N + \left(\frac{\bar{\lambda}}{\lambda}N\tilde{\epsilon} + \bar\lambda\right) \Ident_N \right) - 2\log(\delta)}
 \end{equation*} }
 and
 \begin{align}
  S^2_N(x) & = \tilde{\epsilon} + \sqrt{N} \tilde{\epsilon}  \| (\bb{K}_N + \lambda \Ident_N )^{-1}\bb{k}_N(x) \| \nonumber \\
    & \hspace{1cm} +  (\sqrt{N} \tilde{\epsilon} +  \| \bb{k}_N(x) \|) C_N(x) \label{eq.S} \\
  C_N(x) & = \left( \frac{1}{\lambda} + \|  (\bb{K}_N + \lambda \Ident_N)^{-1} \| \right)(\| \bb{k}_N(x) \| + \sqrt{N} \tilde{\epsilon}) \nonumber \\
    & \hspace{1cm} + \| (\bb{K}_N - \lambda \Ident_N)^{-1} \| \sqrt{N} \tilde{\epsilon} \label{eq.C}
 \end{align}
\end{theorem}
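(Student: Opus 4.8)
The plan is to compare the \emph{actual} regression, which uses the misspecified kernel $k$, with a \emph{virtual} regression run with the true kernel $\tilde{k}$, and to control the resulting discrepancy by a deterministic matrix-perturbation argument. To this end I would introduce the virtual kernel matrix $\tilde{\bb{K}}_N = (\tilde{k}(x_j,x_i))_{i,j}$, the virtual vector $\tilde{\bb{k}}_N(x) = (\tilde{k}(x_i,x))_i$, and the corresponding virtual posterior mean $\tilde{\mu}_N$ and variance $\tilde{\sigma}_N^2$, all formed as in the Background section but with $\tilde{k}$ in place of $k$ and the same nominal noise $\lambda$. Since $f \in \tilde{H}$ with $\|f\|_{\tilde{k}} \leq B$, and the hypotheses on $(x_n)_n$ and $(\epsilon_n)_n$ concern only the data-generating process and not the kernel, Theorem~\ref{thm.main_nominal} applies verbatim to this virtual regression: with probability at least $1-\delta$ one has $|\tilde{\mu}_N(x) - f(x)| \leq \beta_N^{\tilde{k}} \tilde{\sigma}_N(x)$ for all $N \in \N$ and $x \in D$, where $\beta_N^{\tilde{k}}$ is \eqref{eq.beta_main} formed with $\determinant(\tilde{\bb{K}}_N + \bar{\lambda}\Ident_N)$. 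Everything that follows is deterministic and is carried out on this single event, so the final probability statement is inherited with no further union bound.

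Next I would record the three elementary consequences of the closeness assumption $\sup_{x,x'}|k(x,x')-\tilde{k}(x,x')| \leq \tilde{\epsilon}$: the entrywise bound gives $\|\tilde{\bb{K}}_N - \bb{K}_N\| \leq N\tilde{\epsilon}$ in operator norm, as well as $\|\tilde{\bb{k}}_N(x) - \bb{k}_N(x)\| \leq \sqrt{N}\tilde{\epsilon}$ and $|\tilde{k}(x,x) - k(x,x)| \leq \tilde{\epsilon}$. The confidence width $\beta_N^{\tilde{k}}$ is then handled by a Loewner-order argument: from $\tilde{\bb{K}}_N \preceq \bb{K}_N + N\tilde{\epsilon}\Ident_N$ one obtains $\tilde{\bb{K}}_N + \bar{\lambda}\Ident_N \preceq \bb{K}_N + (\lambda + N\tilde{\epsilon})\Ident_N$ (up to the $\max\{1,\cdot\}$ normalization in $\bar{\lambda}$), and monotonicity of the determinant on the positive semidefinite cone yields $\determinant(\tilde{\bb{K}}_N + \bar{\lambda}\Ident_N) \leq \determinant(\bb{K}_N + \overline{(\lambda+N\tilde{\epsilon})}\Ident_N)$, whence $\beta_N^{\tilde{k}} \leq \tilde{\beta}_N = \beta_N(\delta,B,R,\lambda+N\tilde{\epsilon})$. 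Intuitively the kernel mismatch is absorbed as an additional nominal noise of size $N\tilde{\epsilon}$; note that the virtual variance itself is kept at noise level $\lambda$, so $\sigma_N^2(x)$ in the final bound is the genuinely computable posterior variance.

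The core of the proof is the perturbation of the weight vectors. Setting $\bb{A} = \bb{K}_N + \lambda\Ident_N$, $\tilde{\bb{A}} = \tilde{\bb{K}}_N + \lambda\Ident_N$, $\bb{\alpha} = \bb{A}^{-1}\bb{k}_N(x)$ and $\tilde{\bb{\alpha}} = \tilde{\bb{A}}^{-1}\tilde{\bb{k}}_N(x)$, I would apply the resolvent identity $\tilde{\bb{A}}^{-1} - \bb{A}^{-1} = -\tilde{\bb{A}}^{-1}(\tilde{\bb{K}}_N - \bb{K}_N)\bb{A}^{-1}$ together with the triangle inequality, the bound $\|\tilde{\bb{A}}^{-1}\| \leq 1/\lambda$, and the three elementary estimates above to control $\|\tilde{\bb{\alpha}} - \bb{\alpha}\|$; collecting the resulting terms leads to the bound $\|\tilde{\bb{\alpha}} - \bb{\alpha}\| \leq C_N(x)$ with $C_N$ as in \eqref{eq.C}. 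The mean discrepancy then follows in one line, $|\mu_N(x) - \tilde{\mu}_N(x)| = |(\bb{\alpha} - \tilde{\bb{\alpha}})^T\bb{y}_N| \leq C_N(x)\|\bb{y}_N\|$, which supplies the last term of $\nu_N$. For the variance I would write $\tilde{\sigma}_N^2(x) - \sigma_N^2(x) = [\tilde{k}(x,x) - k(x,x)] - \big(\tilde{\bb{k}}_N(x)^T\tilde{\bb{\alpha}} - \bb{k}_N(x)^T\bb{\alpha}\big)$ and split the second term as $(\tilde{\bb{k}}_N - \bb{k}_N)^T\tilde{\bb{\alpha}} + \bb{k}_N^T(\tilde{\bb{\alpha}} - \bb{\alpha})$; bounding $\|\tilde{\bb{\alpha}}\| \leq \|\bb{\alpha}\| + C_N(x)$, reusing $\|\tilde{\bb{\alpha}} - \bb{\alpha}\| \leq C_N(x)$, and noting $\|\bb{\alpha}\| = \|(\bb{K}_N+\lambda\Ident_N)^{-1}\bb{k}_N(x)\|$ reproduces precisely $S_N^2(x)$ in \eqref{eq.S}, so that $\tilde{\sigma}_N^2(x) \leq \sigma_N^2(x) + S_N^2(x)$ and hence $\tilde{\sigma}_N(x) \leq \sqrt{\sigma_N^2(x) + S_N^2(x)}$.

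It remains to assemble the pieces. On the probability-$(1-\delta)$ event, the triangle inequality gives $|\mu_N(x) - f(x)| \leq |\tilde{\mu}_N(x) - f(x)| + |\mu_N(x) - \tilde{\mu}_N(x)| \leq \beta_N^{\tilde{k}}\tilde{\sigma}_N(x) + C_N(x)\|\bb{y}_N\|$, and substituting $\beta_N^{\tilde{k}} \leq \tilde{\beta}_N$ and $\tilde{\sigma}_N(x) \leq \sqrt{\sigma_N^2(x) + S_N^2(x)}$ yields exactly $\nu_N(x)$. I expect the main obstacle to be the perturbation bookkeeping rather than any single hard inequality: producing $C_N(x)$ and $S_N^2(x)$ in precisely the stated form requires choosing the right grouping of terms in the resolvent expansion and keeping the signs in the variance decomposition straight, while ensuring every estimate is uniform in $x$ and simultaneous over all $N$ (which is automatic here, since the perturbation bounds are pointwise-deterministic and the only probabilistic ingredient is the virtual instance of Theorem~\ref{thm.main_nominal}). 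A secondary technical point is the $\max\{1,\cdot\}$ normalization in the determinant step, which is harmless whenever $\lambda \geq 1$ and otherwise only costs a negligible inflation of the effective noise.
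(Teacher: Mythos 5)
Your proposal follows the paper's proof almost step for step: the paper likewise introduces the virtual regression with $\tilde{k}$ at the same nominal noise $\lambda$, applies Theorem~\ref{thm.main_nominal} to it, bounds $\determinant(\tilde{\bb{K}}_N+\lambda\Ident_N)$ by an eigenvalue-perturbation argument (Weyl's inequality rather than your Loewner-monotonicity of the determinant, which is equivalent), bounds $|\mu_N(x)-\tilde\mu_N(x)|\le C_N(x)\|\bb{y}_N\|$ via a perturbation of the weight vectors, and uses exactly your three-term decomposition of $\tilde\sigma_N^2(x)-\sigma_N^2(x)$ together with $\|\tilde{\bb{\alpha}}\|\le\|\bb{\alpha}\|+C_N(x)$ to get $S_N^2(x)$. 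The one substantive discrepancy is the estimate of $\|\tilde{\bb{A}}^{-1}-\bb{A}^{-1}\|$ with $\bb{A}=\bb{K}_N+\lambda\Ident_N$ and $\tilde{\bb{A}}=\tilde{\bb{K}}_N+\lambda\Ident_N$: your resolvent identity gives $\|\tilde{\bb{A}}^{-1}-\bb{A}^{-1}\|\le \tfrac{N\tilde{\epsilon}}{\lambda}\|\bb{A}^{-1}\|$, so collecting terms puts the prefactor $\tfrac{N\tilde{\epsilon}}{\lambda}\|\bb{A}^{-1}\|$ in front of $\|\bb{k}_N(x)\|+\sqrt{N}\tilde{\epsilon}$, whereas \eqref{eq.C} has $\tfrac{1}{\lambda}+\|(\bb{K}_N+\lambda\Ident_N)^{-1}\|$ there; the paper obtains the latter from the cruder triangle inequality $\|\tilde{\bb{A}}^{-1}-\bb{A}^{-1}\|\le\|\tilde{\bb{A}}^{-1}\|+\|\bb{A}^{-1}\|\le\tfrac{1}{\lambda}+\|\bb{A}^{-1}\|$. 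Since neither prefactor dominates the other for all values of $N\tilde{\epsilon}$, your derivation does not literally establish the bound with the $C_N(x)$ of the theorem statement — it proves the same result with a different constant (sharper for small $\tilde{\epsilon}$, since it vanishes as $\tilde{\epsilon}\to 0$, but possibly larger when $N\tilde{\epsilon}$ is big). Either switch to the triangle-inequality estimate to reproduce \eqref{eq.C} exactly, or keep the resolvent route and restate $C_N(x)$ accordingly; one could also take the minimum of the two estimates. Incidentally, the minus sign in $(\bb{K}_N-\lambda\Ident_N)^{-1}$ in \eqref{eq.C} is a typo in the statement — the paper's proof uses $(\bb{K}_N+\lambda\Ident_N)^{-1}$ there — and the $\bar{\lambda}$-versus-$\lambda$ issue you flag at the end is indeed glossed over in the paper's own write-up. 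Everything else, including the observation that no union bound is needed because all perturbation estimates are deterministic, matches the paper.
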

We can interpret Theorem \ref{thm.kernel_misspec} as a robust version (w.r.t. disturbance of the kernel)
of Theorem \ref{thm.main_nominal}. By using the bounds from Theorem \ref{thm.kernel_misspec} we can ensure that the resulting
uncertainty set contains the ground truth with prescribed probability despite using the wrong kernel.

In Theorem \ref{thm.main_nominal}
we have a tube around $\mu_N(x)$ of width $\beta_N(\delta, B, R, \lambda)\sigma_N(x)$, whereas in Theorem \ref{thm.kernel_misspec}
we have a tube around $\mu_N(x)$ of width
\begin{equation*}
  \bar{\beta}_N \sqrt{\sigma^2_N(x) + S^2_N(x)} + C_N(x)\|\bb{y}_N\| 
\end{equation*}
Because of the uncertainty or disturbance
in the kernel we have to increase the nominal noise variance (used in the nominal noise model in GPR),
increase the nominal posterior standard deviation
from $\sigma_N(x)$ to $\sqrt{\sigma^2_N(x) + S^2_N(x)}$
and add an offset to the width of the tube of 
$C_N(x)\|\bb{y}_N\|$. In particular, the uncertainty set now depends on the measured values $\bb{y}_N$.
Note that $C_N(x)$ and $S_N(x)$ depend on the input, but if necessary this dependence can be easily removed by finding an upper bound on
$\|\bb{k}_N(x)\|$.
An interesting observation is that even if $\sigma_N(x)=0$, the width of the tube around $\mu_N(x)$
has nonzero width. Intuitively this is clear since in general
it can happen that $f \not \in H$, but $\mu_N \in H$ by construction.
Finally, a robust version of Proposition \ref{prop.bound_independent_setting} can be derived similarly to
Theorem \ref{thm.kernel_misspec}, see the supplementary material.
\section{Numerical Experiments} \label{section.experiments}
We now test the theoretical results in numerical experiments using synthetic data, where the ground truth is known.
First, we investigate the frequentist behaviour of the uncertainty bounds.
The general setup consists of generating a function from an RKHS with known RKHS norm (the ground truth),
sampling a finite data set, running GPR on this data set and then evaluating our uncertainty bounds.
This learning process is then repeated on many sampled data sets and we check how often the uncertainty bounds are violated. 
To the best of our knowledge, these types of experiments have actually not
been done before, since usually only the final algorithms using the uncertainty bounds have been tested, e.g. in \cite{srinivas2010,cg17},
or no ground truth from an RKHS has been used, cf. \cite{berkenkamp_safe_exploration_rl}.
Furthermore, we would like to remark that the method for generating the ground truth can result in misleading judgements of the
theoretical result. The latter has to hold for all ground truths, but the generating method can introduce a bias, e.g. only generating functions
of a certain shape. To the best of our knowledge, this issue has not been pointed out before.

Second, we demonstrate the practicality and usefulness of our uncertainty bounds by applying them to concrete example from robust control.
\subsection{Frequentist Behavior of Uncertainty Bounds}
\paragraph{Setup of the Experiments}
Unless noted otherwise, for each of the following experiments we use $D=[-1,1]$ as input space and
generate randomly 50 functions (the ground truths) from an RKHS and evaluate these on a grid 1000 equidistant points from $D$.
We generate the functions by randomly selecting a number of center points $x \in D$ and form
a linear combination of kernel functions $k(\cdot,x)$, where $k$ is the kernel of the RKHS.
Additionally, for the SE kernel we use the orthonormal basis (ONB) from \cite[Section~4.4]{svm_book} with random, normally distributed coefficients
to generate functions from the corresponding RKHS.
For each function we repeat the following learning instance 10000 times: We sample uniformly 50 inputs, 
evaluate the ground truth on these inputs and adding normal zero-mean i.i.d. noise with standard deviation (SD) 0.5. 
We then run GPR on each of the training sets, compute the uncertainty sets (for Theorem \ref{thm.main_nominal} this corresponds to the scaling $\beta_{50}$)
and check (on the equidistant grid of $D$) whether the resulting uncertainty set contains the ground truth.
\paragraph{Nominal Setting}
Consider the case that the covariance function used in GPR and the kernel corresponding to the RKHS of the target function coincide.
First, we test the nominal bound from Theorem \ref{thm.main_nominal} with SE and Matern kernels, respectively,
for $\delta=0.1,0.01,0.001,0.0001$. 
The resulting scalings $\beta_{50}$ are shown in Table \ref{tab.nominal}.
As can be seen there, the scalings are reasonably small, roughly in the range of heuristics used in the  literature, cf.  \cite{berkenkamp_safe_exploration_rl}.
Furthermore, the graph of the target function is fully included in the uncertainty set in all repetitions.
This is illustrated in Figure \ref{fig.illustration} (LEFT), where an example instance of this experiment is shown. 
As can be clearly seen there, the posterior mean (blue solid line) is well within the uncertainty set from Theorem \ref{thm.main_nominal} (with $\delta = 0.01$), 
which is not overly conservative.
\begin{figure}
	\includegraphics[width=0.23\textwidth]{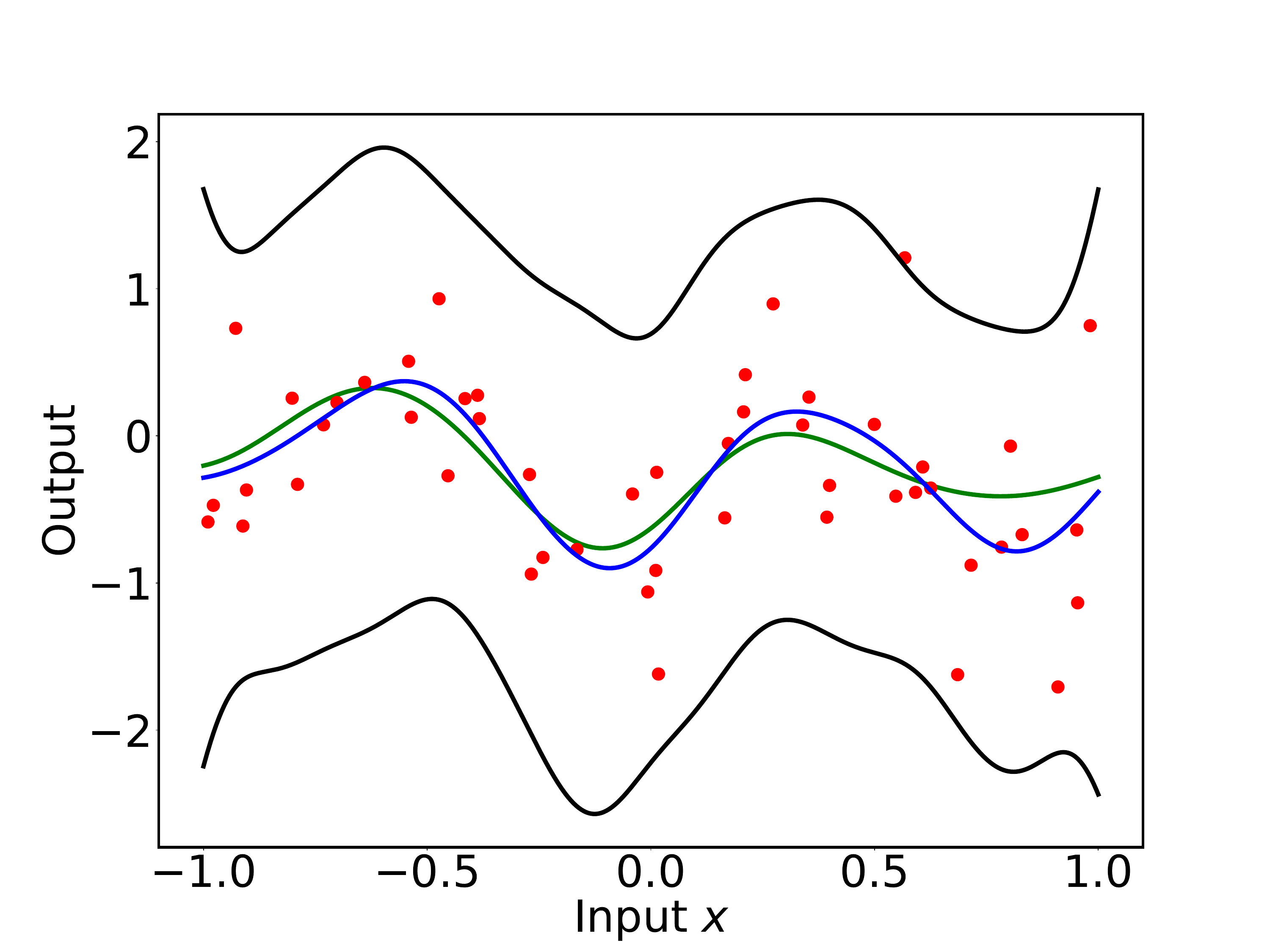}
	\includegraphics[width=0.23\textwidth]{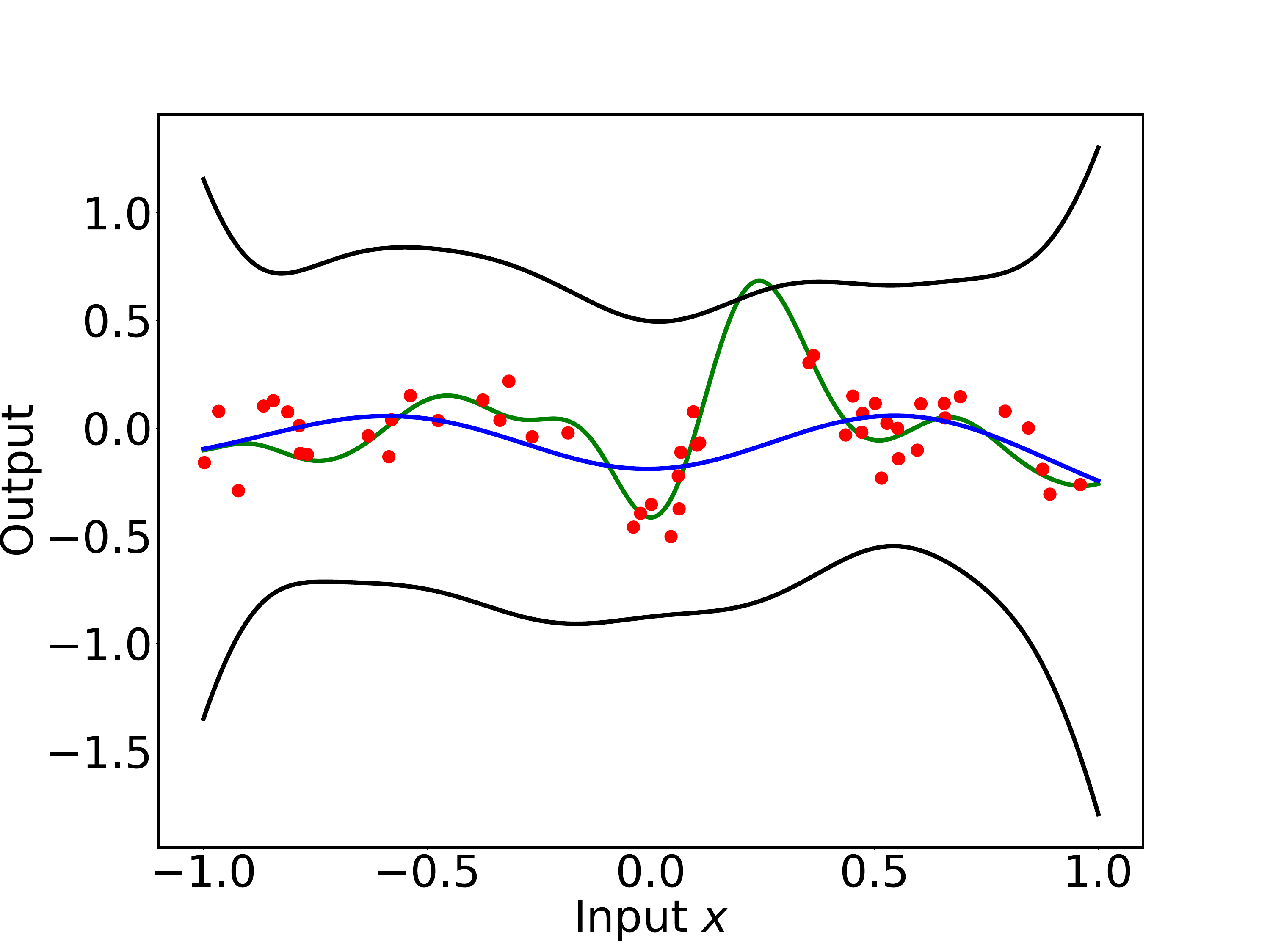}
	\caption{LEFT (Nominal setting): Example function (green) from SE kernel with length-scale 0.5 and RKHS norm 2, learned from 50 samples (red).
	Shown is the posterior mean (blue) and the uncertainty set from Theorem \ref{thm.main_nominal} for $\delta=0.01$. 
	RIGHT (Misspecified setting): Example function from SE kernel with length-scale 0.2, learned with GPR using 
	SE covariance function with length-scale 0.5.
	The violation of the uncertainty set is clearly visible.}
	\label{fig.illustration}
\end{figure}
\paragraph{Misspecified Setting}
We now consider misspecification of the kernel, i.e. different kernels are used for generating the ground truth and as prior covariance function in GPR.
As an example we use the SE kernel with different length-scales and use the ONB from  \cite[Section~4.4]{svm_book} to generate the
RKHS functions.
We start with the \emph{benign} setting from Proposition \ref{prop.simple_kernel_misspec}, where the RKHS corresponding to the covariance
function used in GPR contains the RKHS of the target function. For this, identical settings as in our first experiment above are used,
but now we generate functions from the SE kernel with length-scale 0.5  and use the SE kernel with length-scale 0.2 in GPR. This 
corresponds to the benign setting according to Proposition \ref{prop.se_misspec_simple}.
As expected, we find the same results as above, i.e. the uncertainty set fully contains the ground truth
in all repetitions. Furthermore, the scalings $\beta_{50}$ are roughly of the same size as in the nominal case, cf. Table \ref{tab.misspec} (upper row).

Next, we investigate the \emph{problematic} setting where the RKHS corresponding to the covariance
function used in GPR does not contain the RKHS of the target function anymore. As an example we use again the SE kernel, but now
with length-scale 0.2 for generating RKHS functions and the SE kernel with length-scale 0.5 for GPR. 
We found a considerable number of function instances where the bounds from Theorem \ref{thm.main_nominal} were violated
with higher frequency than $\delta$. More precisely, for a given function the tube of width $\beta_{50} \sigma_{50}(x)$
around $\mu_{50}(x)$ does not fully contain the function in more than $\delta \times 10000$ of the learning instances.
This happened for 2,  6, 12, 13 out of 50 functions for $\delta=0.1,0.01,0.001,0.0001$, respectively.

Interestingly, when performing this experiment using the standard approach of generating functions from the RKHS
based on linear combinations of kernels, we did not find functions that violated the uncertainty bounds more often than prescribed.
This reaffirmes our introductory remark that the method generating the test targets can lead to wrong judgements of the theoretical results.

The results of the previous two experiments indicate that a model misspecification of the kernel can be a problem and a robust result like
Theorem \ref{thm.kernel_misspec} is necessary. Indeed, a repetition of the last experiment with the uncertainty set from Theorem \ref{thm.kernel_misspec}
instead of Theorem \ref{thm.main_nominal} resulted in all functions being contained in the uncertainty set in all repetitions.
An inspection of the average uncertainty set widths
in Table \ref{tab.misspec_robust} indicates some conservatism.
\begin{table}
 \caption{$\beta_{50}$ in nominal setting (mean $\pm$ standard deviation over all repetitions)}
 \label{tab.nominal}
 \centering \scriptsize
 \begin{tabular}{l l l l l}
  $\delta$ & 0.1		& 0.01			& 0.001 		& 0.0001 \\
  \hline
  SE 	   & $6.95 \pm 0.04$	& $7.39 \pm 0.04$	& $7.80 \pm 0.03$	& $8.19 \pm 0.03$ \\
  Matern   & $7.36 \pm 0.04$	& $7.78 \pm 0.04$	& $8.16 \pm 0.04$	& $8.53 \pm 0.04$\\
 \end{tabular}
\end{table}

\begin{table}
	\caption{$\beta_{50}$ in the misspecified setting  (mean $\pm$ standard deviation over all repetitions)}
	\label{tab.misspec}
	 \centering \scriptsize
 \begin{tabular}{l l l l l}
   $\delta$ 		& 0.1		& 0.01			& 0.001 		& 0.0001 \\
	\hline
   Benign 	   	& $6.53 \pm 0.038$	& $6.97 \pm 0.035$	& $7.39 \pm 0.033$	& $7.77 \pm 0.031$ \\
   Problematic   	& $6.11 \pm 0.03$	& $6.64 \pm 0.03$	& $7.11 \pm 0.02$	& $7.54 \pm 0.02$\\
  \end{tabular}
\end{table}

\begin{table}
 \caption{Width of robust uncertainty set (mean $\pm$ SD of average width)}
 \label{tab.misspec_robust}
 \centering \scriptsize
 \begin{tabular}{l l l l l}
  $\delta$ & 0.1		& 0.01			& 0.001 		& 0.0001 \\
 \hline
  Mean & $71.68 \pm 5.36$  & $73.79 \pm 5.36$  & $75.64 \pm 5.37$  & $77.33 \pm 5.37$ \\
  SD & $6.54 \pm 1.73$  & $6.73 \pm 1.78$  & $6.91 \pm 1.82$  & $7.06 \pm 1.86$ 
 \end{tabular}
\end{table}

\subsection{Control Example}
We now show the usefulness of our results for robust control by applying it to a concrete, existing
learning-based control method. Due to space constraints only a brief description of the example can be given here.
For more details and discussions we refer to the supplementary material.

As an example, we choose the algorithm from \cite{solopertoetal_learning_rmpc_gp}
which is a learning-based Robust Model Predictive Control (RMPC) approach that comes with rigorous
control-theoretic guarantees. We refer to \cite[Chapter~3]{RMD} for background on RMPC and
to \cite{hewingetal_learning_mpc_review} for a recent survey on related learning-based control methods.
We follow \cite{solopertoetal_learning_rmpc_gp} and consider the discrete-time system
\begin{equation} \label{eq.system} \small
	\begin{bmatrix}
		x_1^+ \\
		x_2^+
	\end{bmatrix}
	=
	\begin{bmatrix}
		0.995 & 0.095\\
		-0.095 & 0.900
	\end{bmatrix}
	\begin{bmatrix}
		x_1 \\
		x_2
	\end{bmatrix}
	+
	\begin{bmatrix}
		0.048 \\
		0.95
	\end{bmatrix}
	u
	+
	\begin{bmatrix}
		0 \\
		-r(x_2)
	\end{bmatrix}
\end{equation}
modelling a mass-spring-damper system with some nonlinearity $r$ (this could be interpreted as a friction term). 
The goal is the stabilization of the origin subject to the state and control constraints $\mathbb{X}=[-10,10] \times [-10,10]$ and $\mathbb{U}=[-3,3]$,
as well as minimizing a quadratic cost. 

The approach from \cite{solopertoetal_learning_rmpc_gp}  performs this task by interpreting \eqref{eq.system} 
as a linear system with disturbance, given by the nonlinearity $r$, whose graph is a-priori known to lie in the
set $\mathbb{W}_0 = [-10,10] \times [-7,7]$. 
The nonlinearity is assumed to be unknown and has to be learned from data. The RMPC algorithm
requires as an input disturbance sets $\mathbb{W}(x)$ such that $\begin{pmatrix}0 &-r(x_2)\end{pmatrix}^\top \in \mathbb{W}(x)$ for all $x \in \mathbb{X}$,
which are in turn used to generate tightened nominal constraints ensuring robust constraint satisfaction.
Furthermore, the tighter the sets $\mathbb{W}(x)$ are, the better is the performance of the algorithm.
We now randomly generate the function $r$ (which will be our ground truth) from the RKHS with kernel
$k(x,x^\prime) = 4\exp \left( - \frac{(x-x^\prime)^2}{2 \times 0.8^2} \right)$
with RKHS norm 2.
Following \cite{solopertoetal_learning_rmpc_gp}, we uniformly sample
100 partial states $x_2 \in [-10,10]$, evaluate $r$ at these and add i.i.d. Gaussian noise with a standard deviation of 0.01 to it.
The unknown function is then learned using GPR from this data set. Our results then lead to an uncertainty set of the form
$\mathbb{W}(x)=[\mu_{100}(x_2) - \beta_{100} \sigma_{100}(x_2), \mu_{100}(x_2) + \beta_{100} \sigma_{100}(x_2)],$
with $\beta_{100}$ from Theorem \ref{thm.main_nominal} for $\delta:=0.001$.
In particular, with probability at least $1-\delta$ we can guarantee that $r(x_2) \in \mathbb{W}(x)$ holds for all $x \in \mathbb{X}$. 
\begin{figure}
	\includegraphics[width=0.23\textwidth]{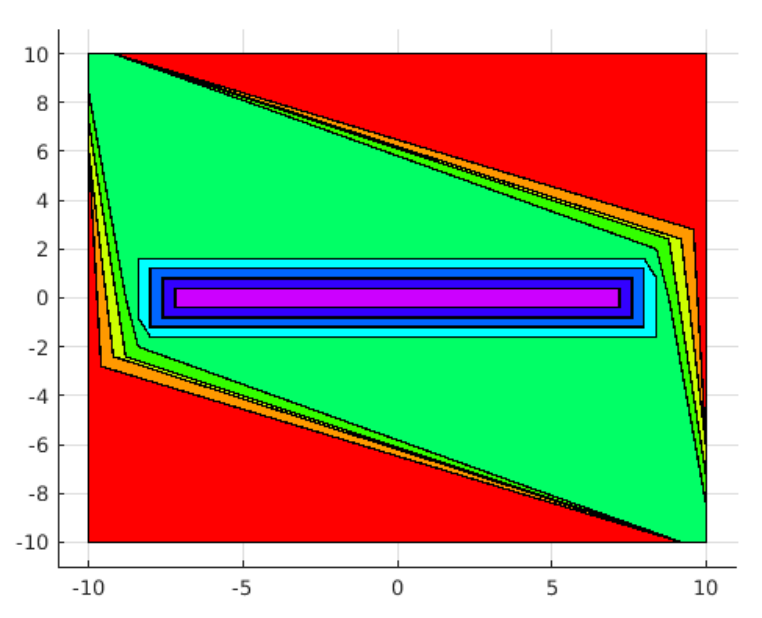}
	\includegraphics[width=0.23\textwidth]{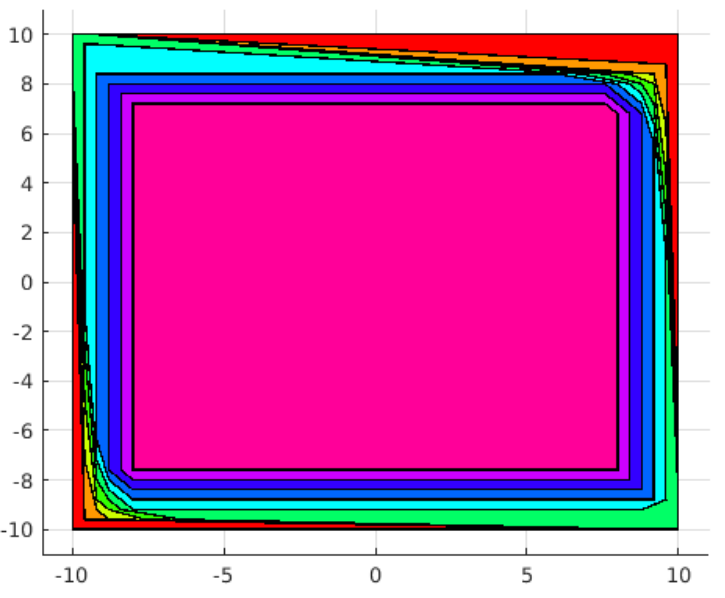}
	\caption{Tightened state constraint sets $\mathbb{Z}_k$ for $k=0,\ldots,9$. Computed from a-priori uncertainty set $\mathbb{W}_0$ (LEFT)
	and learned uncertainty sets $\mathbb{W}(x)$ (RIGHT).}
	\label{fig.constraints}
\end{figure}
Figure \ref{fig.constraints} shows the resulting tightened state constraints for an MPC horizon of 9. Clearly, the state constraint sets from the learned
uncertainty sets are much larger. Furthermore, in contrast to previous work, we can guarantee that the RMPC controller using these tightened state constraints
retains all control-theoretic guarantees with probability at least $1-\delta$. In the present case we can ensure state and control constraint satisfaction, input-to-state stability and convergence to a neighborhood of the origin, with probability at least $1-\delta$. This follows immediately from
\cite[Theorem~1]{solopertoetal_learning_rmpc_gp}, since the ground truth $r$ is covered by the uncertainty sets $\mathbb{W}(x)$ with this probability.
Note that no changes to the existing RMPC scheme were necessary and the control-theoretic guarantees were retained with
prescribed high probability.
\section{Conclusion} \label{section.conclusion}
We discussed the importance of frequentist uncertainty bounds for Gaussian Process Regression and improved
existing results in this kind. By aiming only at a-posteriori bounds we were able to provide
rigorous \emph{and} practical uncertainty results. Our bounds can be explicitly evaluated and are sharp enough to be useful
for concrete applications, as demonstrated with numerical experiments. Furthermore, we also introduced robust
versions that work despite certain model mismatches, which is an important concern in real-world applications.
We see the present work as a starting point for further developments, in particular, domain-specific versions
of our results and more specific and less conservative robustness results.

\subsubsection{Acknowledgments.}
We would like to thank Sayak Ray Chowdhury and Tobias Holicki for helpful discussions, 
Steve Heim for useful comments on a draft of this work,
and Raffaele Soloperto for providing the Matlab code from \cite{solopertoetal_learning_rmpc_gp}.
Furthermore, we would like to thank the reviewers for their helpful and constructive feedback.
Funded by Deutsche Forschungsgemeinschaft (DFG, German Research Foundation) under Germany's Excellence Strategy - EXC 2075 – 390740016 and in part by the Cyber Valley Initiative. We acknowledge the support by the Stuttgart Center for Simulation Science (SimTech).

\bibliography{refs_main}

\clearpage

\appendix

\onecolumn
\section*{Supplementary material}
In this supplementary material, {\color{blue} we first give details on the corrections in this version.}
We then provide the detailed proofs in Section \ref{sect.suppl.theory} that were 
omitted in the main paper, and further details on the numerical examples in 
Section \ref{sect.suppl.theory}.
Before this, we provide a straightforward and illustrative introduction to 
our results and their use in Section \ref{sect.suppl.overview}\footnote{The addition of such a section has been suggested by during the Review process. We thank the reviewers for their very helpful suggestion.}.

{\color{blue}
\section*{Corrections}
This version contains some corrections of the original work, which we now describe. 

Unfortunately, there was a problem with constants in Theorem \ref{thm.main_nominal}. In order to prove this result, we rely on the concentration inequality \cite[Theorem~1]{cg17}, which requires that the nominal noise variance $\lambda$ in the GP regression is at least 1. In order to provide an uncertainty bound for all $\lambda>0$, we introduced a modified nominal variance $\bar\lambda$, but this object was not used correctly in the proof. In this version, we have repaired the issue, cf. Theorem \ref{thm.main_nominal} and its proof. All results that rely on it were changed accordingly (by using the correct formula for $\beta_N$). Among the results in this work, only a constant in Theorem \ref{thm.kernel_misspec} is affected and has also been corrected.
Note that for $\lambda\geq 1$, we have $\bar\lambda=\lambda$, and the results as stated in the original version hold without requiring correction.

Second, the statement of Proposition \ref{prop.se_misspec_simple} was imprecise. This result states that Theorem \ref{thm.main_nominal} holds also in a certain misspecified setting without change. However, since the operator norm of the inclusion map described in Proposition \ref{prop.simple_kernel_misspec} is in general greater than 1, one has to use an RKHS norm that is valid in the misspecified setting. We have added a footnote to clarify this issue. %
}

\section{A user-friendly overview} \label{sect.suppl.overview}
We provide in this section a simple tutorial overview of the approach.
For simplicity, consider a scalar function on $D=[-10,10]$, generated from the RKHS of the Squared Exponential (SE) kernel
with RKHS norm 2. A typical example looks as follows.
\begin{center}
\includegraphics[width=0.5\textwidth]{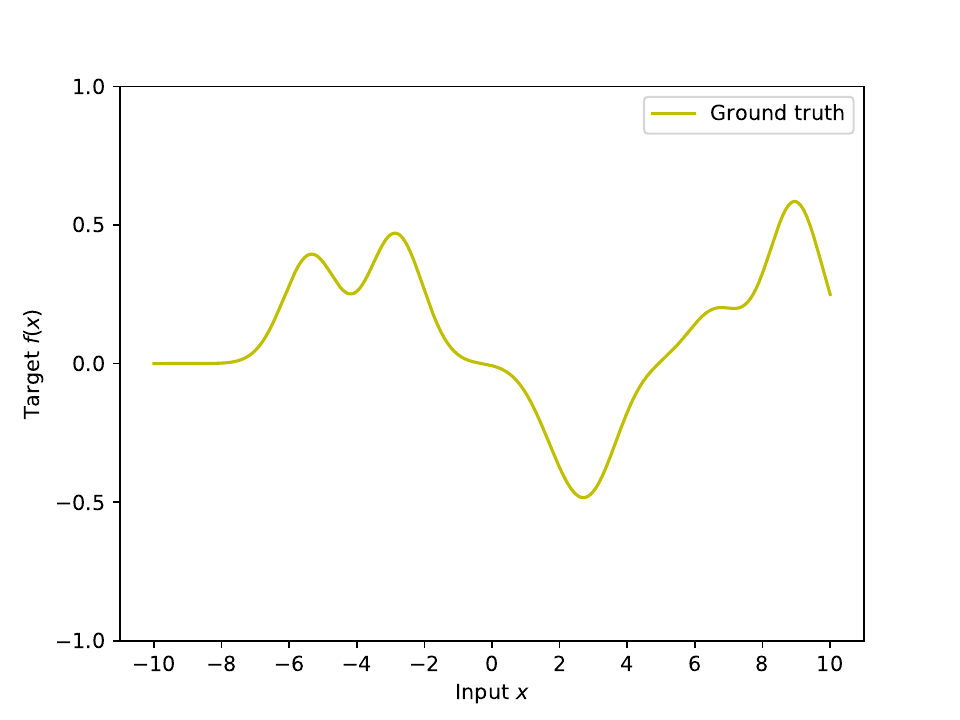}
\end{center}
Next, assume we have a data set of input-output samples from the unknown target function.
Since our bounds are rather flexible in the choice of the underlying concentration results, many common noise settings
are supported. In particular, in Theorem 1 and 5 we use the powerful Theorem 1 from (Chowdhury and Gopalan 2017)
which is compatible with (conditional) subgaussian martingale-difference noise.
For demonstration purposes, we use in this simple example i.i.d. $\mathcal{N}(0,0.1)$ noise, resulting in the following data set
with 100 data points.
\begin{center}
\includegraphics[width=0.5\textwidth]{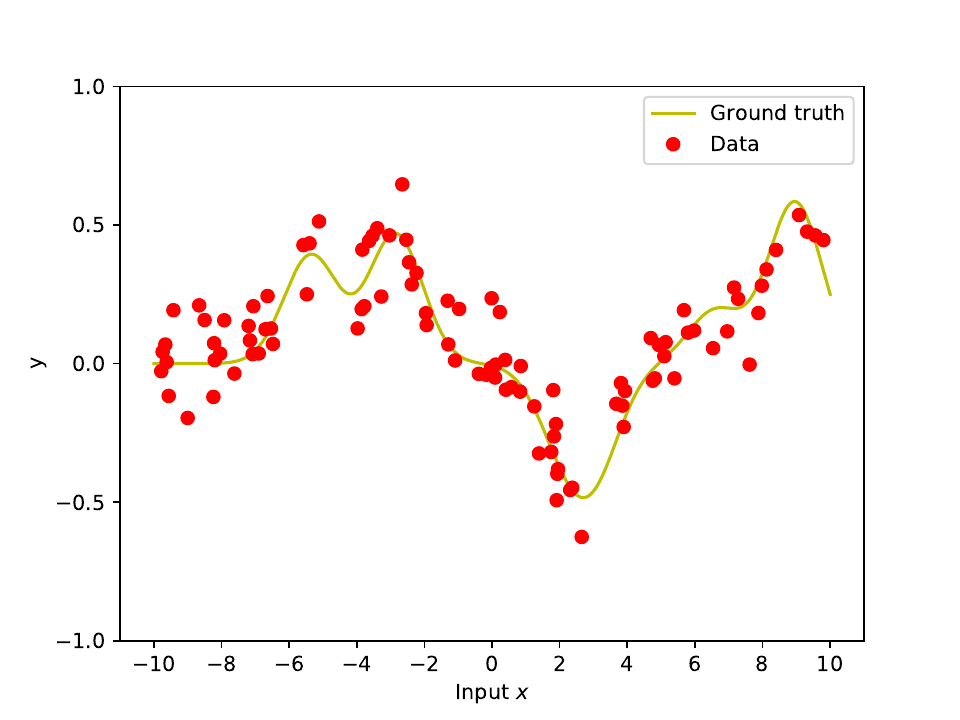}
\end{center}
Since we rely on standard Gaussian Process Regression (GPR), a large variety on modelling tools are available. In particular,
since no additional assumption on the kernel (covariance function) are necessary later on, we can include additional prior knowledge
in a systematic manner in the kernel. For example, linearly constrained constrained GPs (Jidling et al. 2017) or related approaches
like (Geist and Trimpe 2020) could be used.
In this simple example, we do not assume any additional structural and use the SE covariance function with length-scale 0.8
that has been used to generate the target function. However, we like to stress that this is only for demonstrative purposes. Everything would work with multivariate inputs or more general kernels.
Furthermore, for simplicity we use the true noise variance. Again, this is for demonstrative purposes and general subgaussian noise would also work.
This results in the following posterior GP. Note that the uncertainty set here is \emph{not} in a frequentist sense, i.e.,
in general we cannot say with probability at least 2 SD it covers the ground truth. 
\begin{center}
\includegraphics[width=0.5\textwidth]{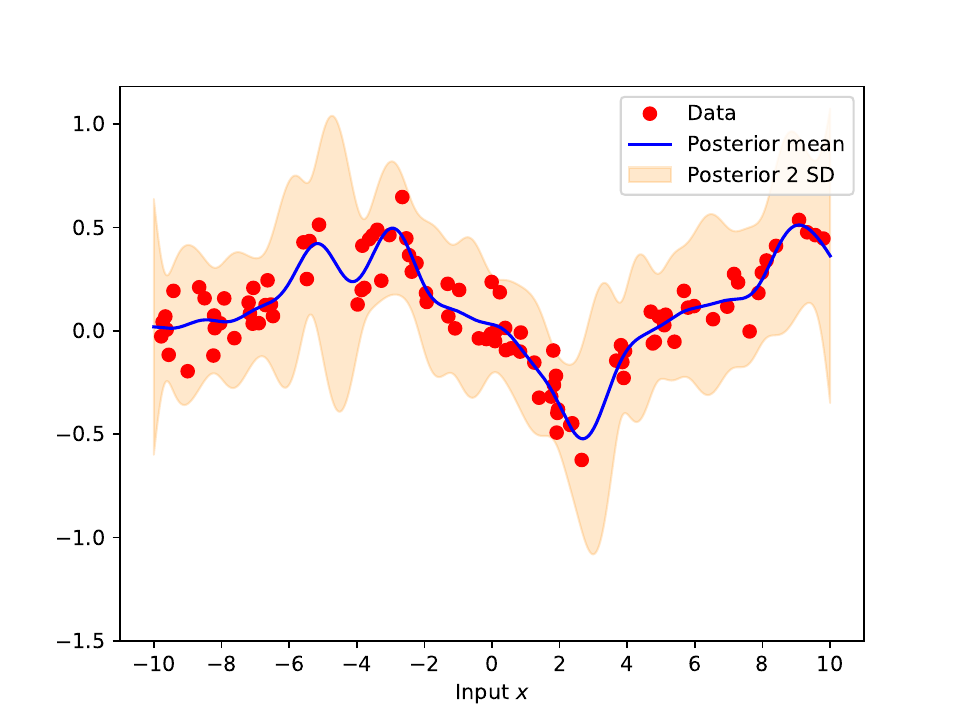}
\end{center}
Suppose now that we need tight frequentist uncertainty sets. This is a relevant scenario in many safe-critical applications,
e.g., in robust learning-based control. First, we need to determine whether the nominal results (e.g., Theorem 1 in the main text)
are applicable or a robust version is necessary. For simplicity, we work in the nominal setting, i.e. we use the SE kernel used to
generate the ground truth. Next, we need a reasonable upper bound on the RKHS norm of the target function,
as in related work like \cite{cg17} and (Maddalena, Scharnhorst, and Jones 2020). Deriving such a bound
from established prior knowledge (in particular, in engineering applications) is ongoing work, so here we simply use
the true RKHS norm and a safety margin like in (Maddalena, Scharnhorst, and Jones 2020) (here we bound the norm by 3).
Note that in contrast to heuristics that directly set a value for the scaling factor $\beta$, the RKHS norm bound has a concrete
interpretation as a measure of complexity of the ground truth and hence is directly related to prior knowledge 
about the target function.
We use $\delta=0.01$ in Theorem 1, resulting in the following uncertainty set.
\begin{center}
\includegraphics[width=0.5\textwidth]{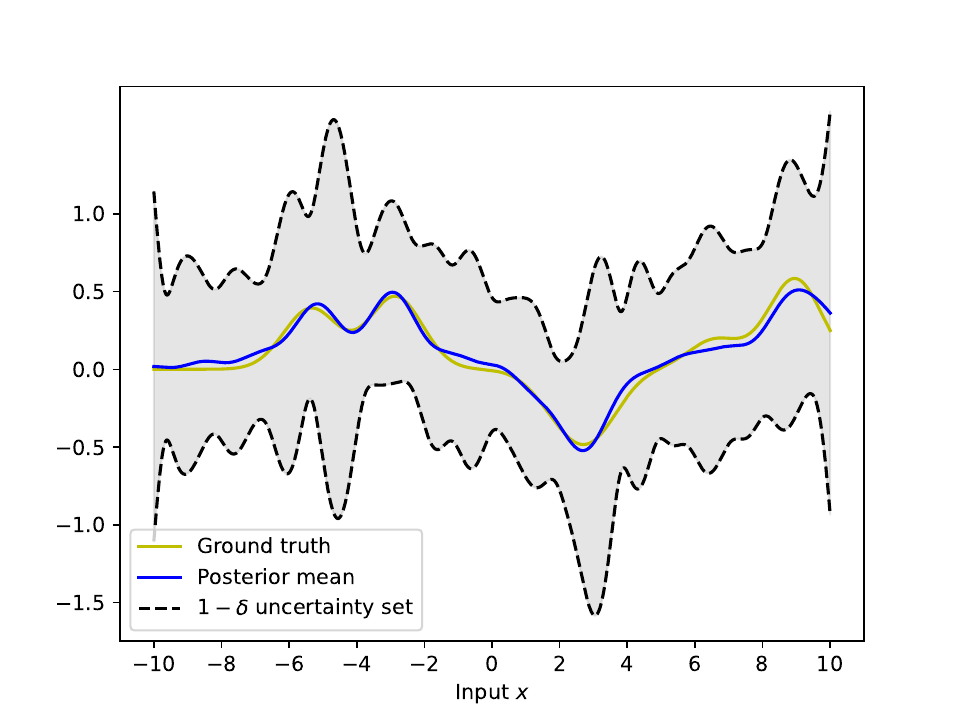}
\end{center}
Note that the uncertainty set covers the ground truth at least with probability $1-\delta=0.99$ w.r.t. the data generating process.
This is exactly the guarantee needed for robust approaches. 
The latter are usually non-stochastic and hence any non-stochastic guarantees derived from the uncertainty set hold with the same high frequentist probability as the uncertainty set itself.
\section{Theory} \label{sect.suppl.theory}

\subsection{Proof of Theorem 1}
Following the proof of Theorem 2 from \cite{cg17} we get for all $x \in D$ that
\begin{align} \label{eq.cg17.1}
| f(x) - \mu_N(x)| & = | f(x) - \bb{k}_N(x)^T (\bb{K}_N + \lambda \Ident_N)^{-1}(\bb{f}_N + \bb{\epsilon}_N) | \\
& \leq | f(x) - \bb{k}_N(x)^T (\bb{K}_N + \lambda \Ident_N)^{-1}\bb{f}_N | 
  + | \bb{k}_N(x)^T (\bb{K}_N + \lambda \Ident_N)^{-1}\bb{\epsilon}_N| \nonumber \\
& \leq \| f \|_k \sigma_N(x) + 
  \frac{1}{\sqrt{\lambda}}\sigma_N(x) \sqrt{\bb{\epsilon}_N^T \bb{K}_N(\bb{K}_N + \lambda \Ident_N)^{-1} \bb{\epsilon}_N}. \nonumber
\end{align}
We furthermore have
\begin{align*}
 \bb{K}_N(\bb{K}_N + \lambda \Ident_N)^{-1} 
 & = \frac{\bar\lambda}{\lambda} \left( \frac{\bar\lambda}{\lambda}\right)^{-1}  \bb{K}_N(\bb{K}_N + \lambda \Ident_N)^{-1} \\
 & = \frac{\bar\lambda}{\lambda}  \bb{K}_N\left(\left( \frac{\bar\lambda}{\lambda}\right) (\bb{K}_N + \lambda \Ident_N)\right)^{-1} \\
 & = \frac{\bar\lambda}{\lambda}  \bb{K}_N\left(\frac{\bar\lambda}{\lambda} \bb{K}_N + \bar\lambda \Ident_N\right)^{-1} \\
 & = \bar{\bb{K}}_N \left(\bar{\bb{K}}_N + \bar\lambda \Ident_N\right)^{-1},
\end{align*}
where we defined $\bar{\bb{K}}_N=\frac{\bar\lambda}{\lambda}  \bb{K}_N$.
Since $\bar{\lambda}\geq 1$ and $\bar{\bb{K}}_N$ is positive definite, we can use Theorem 1 from \cite{cg17} (applied to the kernel $\frac{\bar\lambda}{\lambda} k$ instead of $k$) to get that with probability at least $1-\delta$ 
\begin{equation*}
  \bb{\epsilon}_N^T \bar{\bb{K}}_N(\bar{\bb{K}}_N + \bar{\lambda} \Ident_N)^{-1} \bb{\epsilon}_N
  \leq R\sqrt{\log\left( \determinant(\bar{\bb{K}}_N + \bar{\lambda} \Ident_N) \right) - 2\log(\delta)} \sigma_N(x)
\end{equation*}
for all $x \in D$. Because $\sigma_N$ does not depend on $\bb{\epsilon}_N$, we can use this in \eqref{eq.cg17.1}
and the result follows. \qed

\subsection{Proof of Proposition 2}
 Let $x \in D$ be arbitrary, then
 \begin{align*}
  | f(x) - \mu_N(x)| & = | f(x) - \bb{k}_N(x)^T (\bb{K}_N + \lambda \Ident_N)^{-1}(\bb{f}_N + \bb{\epsilon}_N) | \\
  & \leq | f(x) - \bb{k}_N(x)^T (\bb{K}_N + \lambda \Ident_N)^{-1}\bb{f}_N | 
    + | \bb{k}_N(x)^T (\bb{K}_N + \lambda \Ident_N)^{-1}\bb{\epsilon}_N|.
 \end{align*}
 Exactly as in the proof of Theorem 2 in \cite{cg17} we have
 \begin{equation*}
   | f(x) - \bb{k}_N(x)^T (\bb{K}_N + \lambda \Ident_N)^{-1}\bb{f}_N | \leq B\sigma_N(x)
 \end{equation*}
 and since (use Cauchy-Schwarz)
 \begin{align*}
  | \bb{k}_N(x)^T (\bb{K}_N + \lambda \Ident_N)^{-1}\bb{\epsilon}_N| 
  & \leq \| (\bb{K}_N + \lambda \Ident_N)^{-1} \bb{k}_N(x) \| \| \bb{\epsilon}_N \|
  =  \| (\bb{K}_N + \lambda \Ident_N)^{-1} \bb{k}_N(x) \| \sqrt{\bb{\epsilon}_N^T \Ident_N \bb{\epsilon}_N}
 \end{align*}
 we get from Theorem 2.1 in (Hsu et al. 2021) that
 \begin{equation*}
   \Pp\pbl \|\bb{\epsilon}_N \|^2 \leq  R^2 (N + 2\sqrt{N}\sqrt{\log(\frac{1}{\delta})} + 2\log(\frac{1}{\delta})) \pbr \geq 1 - \delta
 \end{equation*}
 and the result follows. \qed
 
\subsection{Proof of Theorem 5}
Denote by $\tilde{\mu}_N$, $\tilde{\sigma}_N^2$, $\tilde{\bb{K}}_N$ the posterior mean, posterior variance
and Gram matrix of the GP, but with $\tilde{k}$ as covariance function, and analogously $\tilde{\bb{k}}_N$.
Let $x \in D$ be arbitrary, then we have
\begin{equation*}
 |f(x) - \mu_N(x)| \leq | f(x) - \tilde{\mu}_N(x) | + | \tilde{\mu}_N(x) - \mu_N(x) |.
\end{equation*}
Our strategy will be to bound the first term on the right-hand-side with Theorem 1
and then upper bound all resulting or remaining quantities by expressions involving only $k$ instead of $\tilde{k}$.
As a preparation we first derive some elementary bounds that are frequently used later on.
By assumption,
\begin{equation} \label{eq.term1}
 \| \tilde{\bb{k}}_N(x) - \bb{k}_N(x) \| = \sqrt{\sum_{i=1}^N (\tilde{k}(x,x_i) - k(x,x_i))^2 } \leq \sqrt{N} \tilde{\epsilon}
\end{equation}
and hence (using the triangle inequality)
\begin{equation} \label{eq.term2}
 \| \tilde{\bb{k}}_N(x) \| \leq \| \bb{k}_N(x) \| + \| \tilde{\bb{k}}_N(x) - \bb{k}_N(x) \| \leq \| \bb{k}_N(x) \| + \sqrt{N} \tilde{\epsilon}.
\end{equation}
Furthermore, since $\tilde{\bb{K}}_N$ is positive semidefinite
\begin{equation*}
  \| (\tilde{\bb{K}}_N + \lambda \Ident_N )^{-1} \| = \lambda_{\text{max}}( (\tilde{\bb{K}}_N + \lambda \Ident_N )^{-1})
  = \frac{1}{\lambda_{\text{min}}(\tilde{\bb{K}}_N + \lambda \Ident_N )} \leq \frac{1}{\lambda}
\end{equation*}
and hence together with the triangle inequality
\begin{equation} \label{eq.term3}
 \| (\tilde{\bb{K}}_N + \lambda \Ident_N )^{-1} - (\bb{K}_N + \lambda \Ident_N)^{-1} \| 
 \leq \frac{1}{\lambda} + \|  (\bb{K}_N + \lambda \Ident_N)^{-1} \|. 
\end{equation}
Finally, using first the triangle inequality and then the submultiplicativity of the spectral norm we get
\begin{align*}
  & \| (\tilde{\bb{K}}_N + \lambda \Ident_N )^{-1}\tilde{\bb{k}}_N(x) - (\bb{K}_N + \lambda \Ident_N)^{-1}\bb{k}_N(x) \| \\
  & \hspace{1cm} \leq \| \left( \tilde{\bb{K}}_N + \lambda \Ident_N )^{-1} -  (\bb{K}_N + \lambda \Ident_N)^{-1} \right)\tilde{\bb{k}}_N(x) \| 
      + \| (\bb{K}_N + \lambda \Ident_N)^{-1} ( \tilde{\bb{k}}_N(x) - \bb{k}_N(x) ) \| \\
  & \hspace{1cm} \leq \| (\tilde{\bb{K}}_N + \lambda \Ident_N )^{-1} -  (\bb{K}_N + \lambda \Ident_N)^{-1}  \| \|\tilde{\bb{k}}_N(x) \| 
      + \| (\bb{K}_N + \lambda \Ident_N)^{-1} \| \|  \tilde{\bb{k}}_N(x) - \bb{k}_N(x) \|
\end{align*}
and hence from \eqref{eq.term1}, \eqref{eq.term2}, \eqref{eq.term3}
\begin{align} \label{eq.term4}
 \| (\tilde{\bb{K}}_N + \lambda \Ident_N )^{-1}\tilde{\bb{k}}_N(x) - (\bb{K}_N + \lambda \Ident_N)^{-1}\bb{k}_N(x) \|
 \leq C_N(x)
\end{align}
with
\begin{equation*}
 C_N(x) = \left( \frac{1}{\lambda} + \|  (\bb{K}_N + \lambda \Ident_N)^{-1} \| \right)(\| \bb{k}_N(x) \| + \sqrt{N} \tilde{\epsilon})
    + \| (\bb{K}_N + \lambda \Ident_N)^{-1} \| \sqrt{N} \tilde{\epsilon}
\end{equation*}
Now,
\begin{align*}
 | \tilde{\mu}_N(x) - \mu_N(x) | 
 & = | \tilde{\bb{k}}_N(x)  (\tilde{\bb{K}}_N + \lambda \Ident_N )^{-1} \bb{y}_N - \bb{k}_N(x)  (\bb{K}_N + \lambda \Ident_N)^{-1} \bb{y}_N | \\
 & \leq  \| (\tilde{\bb{K}}_N + \lambda \Ident_N )^{-1}\tilde{\bb{k}}_N(x) + (\bb{K}_N + \lambda \Ident_N)^{-1}\bb{k}_N(x)\|\|\bb{y}_N\| \\
 & \leq C_N(x) \| \bb{y}_N \|,
\end{align*}
where we used Cauchy-Schwarz in the first inequality and \eqref{eq.term4} in the second.
Using Theorem 1 we get that
 \begin{equation*}
 \Pp\pbl | \tilde{\mu}_N(x) - f(x) | \leq \tilde{\beta}_N \tilde{\sigma}_N(x) \: \forall N \in \N, x \in D \pbr \geq 1 - \delta
\end{equation*}
where
 \begin{equation*}
 \tilde{\beta}_N = B + \frac{R}{\sqrt{\lambda}}\sqrt{\log\left( \determinant(\bar{\lambda}/\lambda\tilde{\bb{K}}_N + \bar\lambda \Ident_N) \right) - 2\log(\delta)}.
\end{equation*}
Let $\lambda_i(\tilde{\bb{K}}_N)$ be the $i$-th largest eigenvalue of $\tilde{\bb{K}}_N$, then we get from
Weyl's inequality and the definition of the Frobenius norm that
\begin{equation*}
 \lambda_i(\tilde{\bb{K}}_N) \leq \lambda_i(\bb{K}_N) + \| \tilde{\bb{K}}_N - \bb{K}_N \| \leq \lambda_i(\bb{K}_N) + N \tilde{\epsilon},
\end{equation*}
and hence
\begin{align*}
 \log \determinant \left(\frac{\bar{\lambda}}{\lambda}\tilde{\bb{K}}_N + \bar\lambda \Ident_N \right) 
  & = \log \left( \prod_{i=1}^N \lambda_i\left(\frac{\bar{\lambda}}{\lambda}\tilde{\bb{K}}_N + \bar\lambda \Ident_N \right) \right) \\
 & = \log \left( \prod_{i=1}^N \left(\frac{\bar{\lambda}}{\lambda}\lambda_i(\tilde{\bb{K}}_N)  + \bar\lambda\right) \right) \\
 & = \sum_{i=1}^N \log\left(\frac{\bar{\lambda}}{\lambda}\lambda_i(\tilde{\bb{K}}_N) + \bar\lambda\right) \\
 & \leq \sum_{i=1}^N \log\left(\frac{\bar{\lambda}}{\lambda}\lambda_i(\bb{K}_N) 
  + \frac{\bar{\lambda}}{\lambda} N \tilde{\epsilon} + \bar\lambda\right) \\
 & = \log \determinant \left(\frac{\bar{\lambda}}{\lambda} \bb{K}_N + \left(\frac{\bar{\lambda}}{\lambda}N\tilde{\epsilon} + \bar\lambda\right) \Ident_N \right).
\end{align*}
In particular,
\begin{equation*}
  \tilde{\beta}_N = B + \frac{R}{\sqrt{\lambda}}\sqrt{\log\left( \determinant(\bar{\lambda}/\lambda\tilde{\bb{K}}_N + \bar\lambda \Ident_N) \right) - 2\log(\delta)}
  \leq  B +  \frac{R}{\sqrt{\lambda}}\sqrt{\log \determinant \left(\frac{\bar{\lambda}}{\lambda} \bb{K}_N + \left(\frac{\bar{\lambda}}{\lambda}N\tilde{\epsilon} + \bar\lambda\right) \Ident_N \right) - 2\log(\delta)} =: \bar{\beta}_N
\end{equation*}
Turning to the posterior variance, we get from the triangle inequality
\begin{align*}
 \tilde{\sigma}_N^2(x) & \leq \sigma_N^2(x) + | \sigma_N^2(x) - \tilde{\sigma}_N^2(x) |.
\end{align*}
We continue with 
\begin{align*}
 | \sigma_N^2(x) - \tilde{\sigma}_N^2(x) | & = | k(x,x) - \bb{k}_N(x)^T ( \bb{K}_N + \lambda \Ident_N )^{-1} \bb{k}_N(x)
    - \tilde{k}(x,x) + \tilde{\bb{k}}_N(x)^T ( \tilde{\bb{K}}_N + \lambda \Ident_N )^{-1} \tilde{\bb{k}}_N(x)| \\
 & \leq | k(x,x) - \tilde{k}(x,x)| + | (\tilde{\bb{k}}_N(x) - \bb{k}_N(x) )^T( \tilde{\bb{K}}_N + \lambda \Ident_N )^{-1} \tilde{\bb{k}}_N(x)| \\
    & \hspace{1cm} + |\bb{k}_N(x)^T( ( \tilde{\bb{K}}_N + \lambda \Ident_N )^{-1} \tilde{\bb{k}}_N(x) - 
    (\bb{K}_N + \lambda \Ident_N )^{-1} \bb{k}_N(x)) | \\
 & \leq | k(x,x) - \tilde{k}(x,x)| + \| \tilde{\bb{k}}_N(x) - \bb{k}_N(x) \|\| ( \tilde{\bb{K}}_N + \lambda \Ident_N )^{-1}\tilde{\bb{k}}_N(x)\|\\
    & \hspace{1cm} + \| \bb{k}_N(x) \| \| ( \tilde{\bb{K}}_N + \lambda \Ident_N )^{-1} \tilde{\bb{k}}_N(x) 
	- (\bb{K}_N + \lambda \Ident_N )^{-1} \bb{k}_N(x)\| \\
 & \leq \tilde{\epsilon} 
    + \sqrt{N} \tilde{\epsilon}  \| (\bb{K}_N + \lambda \Ident_N )^{-1}\bb{k}_N(x) \|  
    +  (\sqrt{N} \tilde{\epsilon} +  \| \bb{k}_N(x) \|) C_N(x) = S^2_N(x),
\end{align*}
where we used the triangle inequality again in the first inequality, Cauchy-Schwarz in the second inequality
and finally \eqref{eq.term1}, \eqref{eq.term2}, \eqref{eq.term3} together with
\begin{align*}
 \| ( \tilde{\bb{K}}_N + \lambda \Ident_N )^{-1}\tilde{\bb{k}}_N(x)\|
  & \leq \| (\bb{K}_N + \lambda \Ident_N )^{-1}\bb{k}_N(x) \|
    + \| ( \tilde{\bb{K}}_N + \lambda \Ident_N )^{-1}\tilde{\bb{k}}_N(x) - (\bb{K}_N + \lambda \Ident_N )^{-1}\bb{k}_N(x) \| \\
  & \leq  \| (\bb{K}_N + \lambda \Ident_N )^{-1}\bb{k}_N(x) \| 
    + C_N(x)
\end{align*}
Putting everything together, we find that with probability at least $1-\delta$
\begin{equation*}
 | \mu_N(x) - f(x) | \leq C_N(x)\|\bb{y}_N\| + \tilde{\beta}_N \tilde{\sigma}_N(x)
\end{equation*}
and therefore, using the upper bounds on $\tilde{\beta}_N$ and $\tilde{\sigma}_N(x)$ derived above, that
with probability at least $1-\delta$
\begin{equation*}
 | \mu_N(x) - f(x) | \leq B_N(x)
\end{equation*}
where
\begin{equation*}
 B_N(x) =  C_N(x)\|\bb{y}_N\| + \bar{\beta}_N \sqrt{\sigma_N^2(x) + \tilde{\epsilon} 
    + \sqrt{N} \tilde{\epsilon}  \| (\bb{K}_N + \lambda \Ident_N )^{-1}\bb{k}_N(x) \|  
    +  (\sqrt{N} \tilde{\epsilon} +  \| \bb{k}_N(x) \|) C_N(x)}
\end{equation*} \qed

\subsection{An alternative robustness result}
The proof of Theorem 5 can be easily adapted to other nominal bounds. In order to illustrate this,
we now state and prove a robust version of Proposition 2.
\begin{proposition*} \label{prop.kernel_misspec_independent_setting}
 Consider the situation of Proposition 2, but this time assume that the target function $f$ is from 
 the RKHS $(\tilde{H}, \|\cdot\|_{\tilde{k}})$ of a different kernel $\tilde{k}$ such that
 still $\|f\|_{\tilde{k}} \leq B$ and $ \sup_{x,x^\prime \in D} |k(x,x^\prime)-\tilde{k}(x,x^\prime)| \leq \tilde{\epsilon}$
 for some $\tilde{\epsilon}\geq 0$. We then have for any $\delta \in (0,1)$ with
 \begin{equation*}
 \tilde{\eta} = R(\| (\bb{K}_N + \lambda \Ident_N)^{-1} \bb{k}_N(x) \| + C_N(x))
 \sqrt{N + 2\sqrt{N}\sqrt{\log(\frac{1}{\delta})} + 2\log(\frac{1}{\delta})}
\end{equation*}
 that
 \begin{equation*}
 \Pp\pbl | \mu_N(x) - f(x) | \leq B\sqrt{\sigma^2_N(x) + S_N^2(x)} + C_N(x)\|\bb{y}_N\|+ \tilde{\eta}_N(x) \: \forall x \in D \pbr \geq 1 - \delta
\end{equation*}
 where $C_N(x)$ and $S^2_N(x)$ are defined in the main text by (4) and (5), respectively.
\end{proposition*}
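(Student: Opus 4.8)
The plan is to mirror the proof of Theorem 5 almost verbatim, replacing only the stochastic ingredient: where that proof invokes Theorem 1 to control the noise contribution, I would instead invoke Proposition 2 (equivalently, the quadratic-form bound of Hsu et al.). The key observation is that every deterministic kernel-perturbation estimate in the proof of Theorem 5---the bounds \eqref{eq.term1}--\eqref{eq.term4} and the resulting estimate $\tilde{\sigma}_N^2(x) \leq \sigma_N^2(x) + S_N^2(x)$---is completely independent of which concentration inequality is used for the noise. Hence all of these transfer without any change, and only the treatment of the noise term is genuinely new.

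Concretely, let $\tilde{\mu}_N$, $\tilde{\sigma}_N^2$, $\tilde{\bb{K}}_N$, $\tilde{\bb{k}}_N$ denote the posterior quantities of the GP that uses the correctly specified kernel $\tilde{k}$. As in Theorem 5, I would start from the split
\[
 |f(x) - \mu_N(x)| \leq |f(x) - \tilde{\mu}_N(x)| + |\tilde{\mu}_N(x) - \mu_N(x)|
\]
and reuse directly the deterministic bound $|\tilde{\mu}_N(x) - \mu_N(x)| \leq C_N(x)\|\bb{y}_N\|$ established there.

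For the first term, since $f \in \tilde{H}$ with $\|f\|_{\tilde{k}} \leq B$ and the tilde-GP uses $\tilde{k}$ both to generate the RKHS and as covariance function, Proposition 2 applies verbatim to it and yields, with probability at least $1-\delta$ and for all $x \in D$,
\[
 |f(x) - \tilde{\mu}_N(x)| \leq B \tilde{\sigma}_N(x) + R\|(\tilde{\bb{K}}_N + \lambda \Ident_N)^{-1}\tilde{\bb{k}}_N(x)\|\sqrt{N + 2\sqrt{N}\sqrt{\log(\tfrac{1}{\delta})} + 2\log(\tfrac{1}{\delta})}.
\]
I would then upper bound $\tilde{\sigma}_N(x) \leq \sqrt{\sigma_N^2(x) + S_N^2(x)}$ using the $S_N^2$ estimate from Theorem 5, and bound the operator-weighted norm by $\|(\tilde{\bb{K}}_N + \lambda \Ident_N)^{-1}\tilde{\bb{k}}_N(x)\| \leq \|(\bb{K}_N + \lambda \Ident_N)^{-1}\bb{k}_N(x)\| + C_N(x)$ via \eqref{eq.term4}. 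Substituting these two replacements converts the raw noise term into precisely $\tilde{\eta}_N(x)$, and adding the mean-difference bound $C_N(x)\|\bb{y}_N\|$ produces the claimed inequality.

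There is essentially no hard step here, since the heavy lifting was already done in Theorem 5. The only genuinely new point to check is that the high-probability event supplied by Hsu et al.---the control of $\|\bb{\epsilon}_N\|^2$---depends only on the noise and not on the kernel, so a single event of probability at least $1-\delta$ simultaneously governs the noise term for the tilde-GP. A mild technical caveat is that $C_N(x)$ contains a factor $1/\lambda$, so one tacitly needs $\lambda > 0$ here, even though Proposition 2 in isolation permits $\lambda = 0$.
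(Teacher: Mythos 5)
Your proposal is correct and follows essentially the same route as the paper's own proof: the same decomposition into $|f(x)-\tilde{\mu}_N(x)|+|\tilde{\mu}_N(x)-\mu_N(x)|$, reuse of the deterministic bounds $C_N(x)\|\bb{y}_N\|$, $|\sigma_N^2(x)-\tilde{\sigma}_N^2(x)|\leq S_N^2(x)$ and $\|(\tilde{\bb{K}}_N+\lambda\Ident_N)^{-1}\tilde{\bb{k}}_N(x)\|\leq\|(\bb{K}_N+\lambda\Ident_N)^{-1}\bb{k}_N(x)\|+C_N(x)$ from the proof of Theorem 5, and an application of Proposition 2 to the correctly specified GP. Your remark that $\lambda>0$ is tacitly required because of the $1/\lambda$ in $C_N(x)$ is a valid observation that the paper does not make explicit.
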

\begin{proof}
Let $x \in D$ be arbitrary, then we have
\begin{equation*}
 |f(x) - \mu_N(x)| \leq | f(x) - \tilde{\mu}_N(x) | + | \tilde{\mu}_N(x) - \mu_N(x) |.
\end{equation*}
The second term can be upper bounded by $C_N(x) \| \bb{y}_N \|$ as in the proof of Theorem 5.
Using Proposition 2 we have that with probability at least $1-\delta$ for all $x \in D$
\begin{equation*}
| \tilde{\mu}_N(x) - f(x) | \leq B\tilde{\sigma}_N(x) +  R \| (\tilde{\bb{K}}_N + \lambda \Ident_N)^{-1} \tilde{\bb{k}}_N(x) \| 
 \sqrt{N + 2\sqrt{N}\sqrt{\log(\frac{1}{\delta})} + 2\log(\frac{1}{\delta})}
\end{equation*}
From the proof of Theorem 5 we have
\begin{equation*}
 | \sigma_N^2(x) - \tilde{\sigma}_N^2(x) | \leq S^2_N(x)
\end{equation*}
and
\begin{equation*}
  \| (\tilde{\bb{K}}_N + \lambda \Ident_N)^{-1} \tilde{\bb{k}}_N(x) \| C_N(x)
\end{equation*}
and the result follows.
\end{proof}

\section{Numerical experiments} \label{sect.suppl.experiments}
We now provide more details on the numerical experiments as well as additional remarks and results.

\subsection{Generating functions from an RKHS} \label{sect.suppl.rkhs_sampling}
For the numerical experiments we need to generate ground truths, i.e. we need to randomly generate
functions belonging to the RKHS of a given kernel. A generic approach is to use the pre-RKHS of the
kernel which is contained (even densely w.r.t. the kernel norm) in the actual RKHS, cf. \cite[Theorem 4.21]{svm_book}
for details. Let $X$ be a set and $k$ a kernel on $X$. For any $N \in \N$, $x_1,\ldots,x_N \in X$
and $\bb{\alpha} \in \R^N$ the function defined by
\begin{equation*}
 x \mapsto \sum_{n=1}^N \alpha_n k(x_n, x)
\end{equation*}
is contained in the (unique) RKHS corresponding to $k$ and has RKHS norm
$\sqrt{\bb{\alpha}^T \bb{K} \bb{\alpha}}$, where $\bb{K}=(k(x_i,x_j))_{i,j=1,\ldots,N}$ is the corresponding Gram matrix.
It is hence possible to generate an RKHS function $f$ of prescribed RKHS norm $B$ by
randomly sampling inputs $x_1,\ldots,x_N \in X$ and coefficients $\tilde{\alpha} \in \R^N$
and setting
\begin{equation} \label{eq.sample_rkhs}
 f(x) = \sum_{n=1}^N \alpha_n k(x_n, x)
\end{equation}
where $\bb{\alpha}=\frac{B}{\sqrt{\bb{\alpha}^T \bb{K} \bb{\alpha}}}\tilde{\bb{\alpha}}$. Of course, $f$ can only be evaluated
at finitely many points $\tilde{X} \subseteq X$.

More concretely, we fix a finite evaluation grid $\tilde{X} \subseteq X$, choose uniformely a number 
$N \in [N_{\text{min}}, N_{\text{max}}] \cap \N$, choose uniformely $N$ pairwise different points $x_1,\ldots,x_N \in \tilde{X}$,
sample $\tilde{\alpha}_i \distr \Norm(0, \sigma_f^2)$ and apply the construction \eqref{eq.sample_rkhs}. For precise choices
of the parameters are given below.

We would like to point out an important aspect. This article is concerned with frequentist results, i.e.,
there is a ground truth from a collection of possible ground truths and the results have to hold for each of these possible
ground truths. In particular, even if a ground truth might be considered pathological, the results have to hold if they are to
be considered \emph{rigorous}. This aspect is important for numerical experiments, especially when trying to assess the
conservatism of a result. In our setting it might happen that the results seem very conservative for functions that are randomly
generated in a certain fashion, but there are RKHS functions (which might be difficult to generate) for which the results
might be sharp. 
Let us illustrate this point with the Gaussian kernel. We use a uniform grid of 1000 points from $[-1,1]$
together with the Gaussian kernel with length scale 0.2. For the pre-RKHS approach we use $N_{\text{min}}=5$
and $N_{\text{max}}=200$ and $\sigma_f^2=1$. As an alternative, we use the ONB described in \cite[Section~4.4]{svm_book} and
consider only the first 50 basis functions from \cite[Equation~(4.35)]{svm_book} for numerical reasons.
We first select the number of basis functions $N$ to use uniformely between 5 and 50 and then choose 
$N$ such functions uniformely. As coefficients we sample $\alpha_i \distr \Norm(0, 1)$ i.i.d. for $i=1,\ldots,N$
and normalize (w.r.t. to $\ell_2$-norm) and multiply by the targeted RKHS norm. For both the pre-RKHS approach and the
ONB approach we use $\|\cdot\|_k=2$ and sample 4 functions each. 
The result is shown in Figure \ref{fig.illustrating_sampling_rkhs}.
\begin{figure}
 \includegraphics[width=\textwidth]{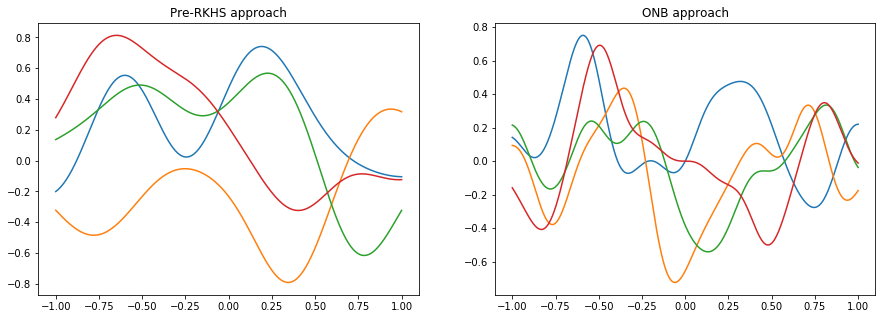}
  \caption{Illustrating sampling from the Gaussian kernel with the pre-RKHS method (left) and with an explicit ONB (right).
 Details are provided in the text.}
  \label{fig.illustrating_sampling_rkhs}
\end{figure}
Clearly, the resulting functions have a different shape, despite having the same RKHS norm with respect to the same kernel.
In particular, the functions generated using the ONB approach seem to make sharper turns.

We like to stress that this strongly suggests that in a frequentist setting one has to be careful with statements
about the conservatism of a proposed bound or method that are based purely on empirical observations.
It might be that the method for generating ground truths has a certain bias, i.e. has a tendency to produce
only ground truths from a certain region of the space of all ground truths.

\subsection{Details on experiments with synthetic data} \label{sect.suppl.synth_experiments}
Unless otherwise stated, we use $[-1,1]$ as the input set and consider a uniform grid of 1000 points for function evaluations.
For the pre-RKHS approach we use $N_{\text{min}}=5$ and $N_{\text{max}}=200$ and $\sigma_f^2=1$ in all experiments.

Unless otherwise stated, in each experiment we sample 50 RKHS functions as ground truth. For each ground truth we
generate 10000 training sets by randomly sampling 50 input points uniformly from the 1000 evaluation points
and add i.i.d. zero-mean normal noise with SD 0.5. For each training set we run Gaussian Process Regression (which we call a learning instance)
and determine the uncertainty set for a specific setting (evaluated again at the 1000 evaluation points).
We consider a learning instance a failure if the uncertainty set does not fully cover the ground truth at all 1000 evaluation points.

For convenience, each experiment has a tag with prefix \emph{exp\_}.
\subsubsection{Testing the nominal bound}
Here we use the SE kernel with length scale 0.2 (\emph{exp\_1\_1\_a})
and the Matern kernel with length scale 0.2 and $\nu=1.5$ (\emph{exp\_1\_1\_b}).
We generate RKHS functions of RKHS norm 2 using the pre-RKHS approach and use the same kernel for generating
the ground truth and running GPR. The nominal noise level of GPR is set to $\lambda=0.5$.
The uncertainty set is generated using Theorem 1 with $B=2$, $R=0.5$ and $\delta=0.1, 0.01, 0.001, 0.0001$.
As already reported in the main text, a violation of the uncertainty set was found in no instance 
(i.e. for all 50 RKHS functions and each of the 10000 learning instances). The mean of the scalings $\beta_{50}$
(together with 1 SD, average is over all
50 RKHS functions and all learning instances) is shown in Table 1 in the main text.
\subsubsection{Exploring conservatism}
In order to explore the potential conservatism of Theorem 1 we repeated the previous experiments
with $\delta=0.01$ and replaced $\beta_{50}$ by 20 equidistant scalings between 2 and $\beta_{50}$.
We used this changed setup for the SE kernel (\emph{exp\_1\_2\_a}), Matern kernel (\emph{exp\_1\_2\_b})
and SE kernel together with the ONB sampling approach (\emph{exp\_1\_2\_c}).
Whereas for the Matern kernel also the heuristic $\beta=2$ works for this example (still no uncertainty set violations),
the situation is rather different for the SE kernel. As shown in Figure \ref{fig.heuristic_se_failure}
for $\beta$ close to 2 the frequency of uncertainty violations is much higher than 0.01, in particular
for the case of sampling from the ONB.
\begin{figure}
 \includegraphics[width=0.5\textwidth]{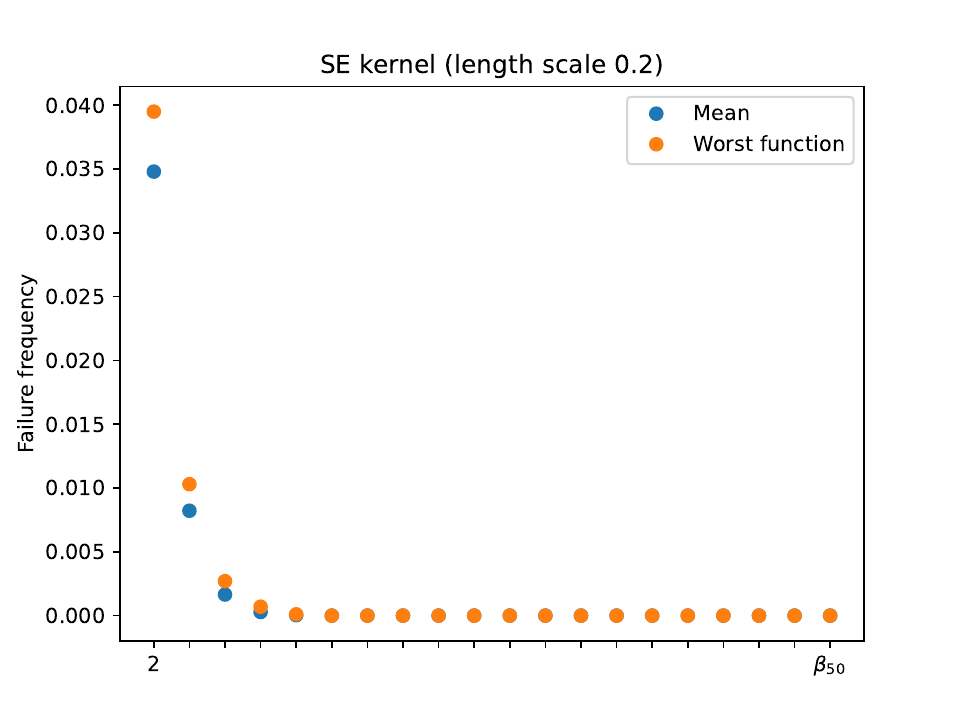}
 \includegraphics[width=0.5\textwidth]{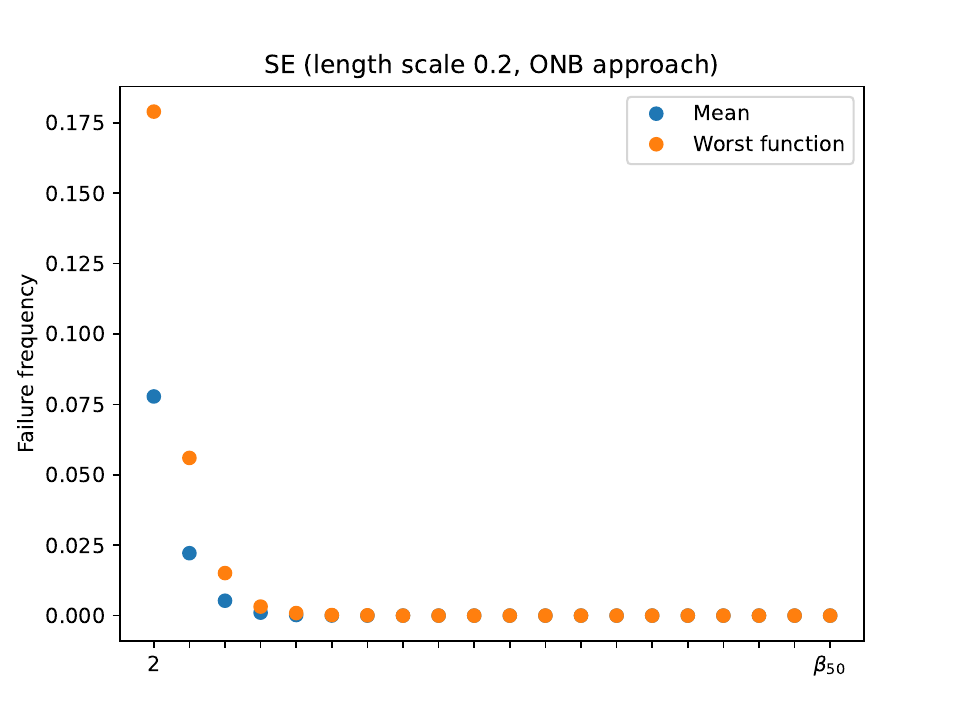}
 \caption{Exploring conservatism of Theorem 1. For each target function and learning instance 20 different uncertainty sets are tested.
 Each such uncertainty is generated from Theorem 1 by replacing $\beta_{50}$
 by $\beta=2,\ldots,\beta_{50}$ (equidistant). Worst function means the highest failure frequency among all 50 target functions
 for the particular scaling. Ground truths sampled with pre-RKHS approach (left) and ONB approach (right).}
 \label{fig.heuristic_se_failure}
\end{figure}

\subsubsection{Misspecified kernel, benign setting} 
We now consider model misspecification. We repeat the first experiment \emph{exp\_1\_1\_a},
but now generate RKHS functions from the SE kernel with lengthscale 0.5 
and use the SE kernel with lengthscale 0.2 in GPR (\emph{exp\_1\_3\_a}). As discussed in the main text,
Proposition 4 indicates that this is a benign form of model misspecification.
Indeed, similar to the case of the correct kernel we find no uncertainty set violation and similar scalings.
The corresponding scalings $\beta_{50}$ are reported in Table 2 (upper row)
in the main text (only for ONB sampling approach, no significant difference compared to pre-RKHS approach).
\subsubsection{Misspecified kernel, problematic setting}
We repeat the previous experiment \emph{exp\_1\_3\_a},
but now generate RKHS functions from the SE kernel with lengthscale 0.2
and use the SE kernel with lengthscale 0.5 in GPR. We use both the pre-RKHS approach
(\emph{exp\_1\_4\_a}) and the ONB approach (\emph{exp\_1\_4\_b}).
The resulting scalings are reported in Table 2 (lower row) in the main text,
again only for the ONB sampling approach.
Interestingly, when using the pre-RKHS sampling approach no uncertainty set violation could be found,
but for the ONB sampling we found 
some target function for which the uncertainty violation was higher than the prescribed $\delta$.

\subsubsection{Robust result for misspecified setting}
Finally, we repeat Experiment \emph{exp\_1\_4\_b} from the previous paragraph, but now using Theorem 5 instead of
Theorem 1. We find no violation of the uncertainty set (over all 50 functions tested, all 10000 learning instances for each function
and all $\delta$ tested). Since now the width of the uncertainty set is not a constant rescaling of the posterior standard deviation anymore,
we report the mean (over all 50 functions and each of the 10000 learning instances) of the average width (over the input space) of the uncertainty
sets ($\pm$ SD w.r.t. averaging over all 50 functions and each of the 10000 learning instances) and the SD
(w.r.t. to averaging over the input space), $\pm$ SD w.r.t. averaging over all 50 functions and each of the 10000 learning instances,
in Table 3 in the main text.

\subsubsection{Reproducibility and computational complexity}
All numerical experiments in this section were implemented with Python 3.8.3 (together with Numpy 1.18.1) and
run on a Intel(R) Xeon(R) CPU E5-1680 v4 with 3.40GHz and 78 GiB memory 
(using Ubuntu 18.04.3 LTS). We used the joblib library (version 0.15.1) with $n\_jobs=14$
and used the Gaussian Process Regression implementation from scikit-learn, version 0.22.1.
Each experiment took less than 5 minutes and required less than 1.5GiB memory (monitored using htop).
Note that all experiments can easily be up and down scaled, depending on the available hardware.
The code used for the experiments and figures can be found at \url{https://github.com/Data-Science-in-Mechanical-Engineering/UncertaintyBounds21}.

\subsection{Control example} \label{sect.suppl.control}
We now provide more details on the control example from Section 4.2. 
\subsubsection{Background} \label{sect.suppl.control_background}
For convenience we now provide a cursory overview of background material from control theory.
We can only provide a sketch and refer to standard textbooks for more details, e.g. \cite{astrom_murray} 
for a general introduction to control and \cite{RMD} for a comprehensive introduction
to MPC.

A common goal in control is feedback stabilization under state and input constraints.
Consider a discrete-time dynamical system (or control system) described by
\begin{equation*}
x_+ = f(x,u)
\end{equation*}
with state space $X$, input space $U$ and transition function $f: X \times U \rightarrow X$.
For simplicity assume that $X=\mathbb{R}^n$, $U=\mathbb{R}^m$ and that $f(0,0)=0$, i.e.,
$(0,0)$ is an equilibrium. Furthermore, consider state constraints $\mathbb{X}\subseteq X$ and
input constraints $\mathbb{U}\subseteq U$. Feedback stabilization amounts now to finding a map
$\mu: X \rightarrow U$ such that $x_\ast=0$ is an asymptotically stable equilibrium for the resulting closed loop system described by
\begin{equation*}
x_+ = f(x, \mu(x)),
\end{equation*}
and all resulting state-input trajectories are contained in the constraint set $\mathbb{X}\times\mathbb{U}$.
Note that this requires restriction of the set of possible initial values.

In many applications not only stability, but also a form of optimality is required from the control system.
For example, assume that being in state $x$ and applying input $u$ incurs a cost of $\ell(x,u)$.
If the control system is run for a long time, then we would like a feedback $\mu$ that not only stabilizes the system, but also incurs a small infinite horizon cost
\begin{equation*}
\sum_{n=0}^\infty \ell(x(n), \mu(x(n))),
\end{equation*}
where $x(n)$ is the resulting state trajectory.

One common methodology for dealing with state constraints and optimal control tasks is MPC.
If the system is in state $x$, MPC solves a finite horizon open loop problem, i.e. it determines
a sequence $u(0),\ldots,u(N-1)$ of admissible input values that minimize some cost criterion.
Only the first input $u(0)$ is applied to the system and this process is repeated at the next time instance.
There is a comprehensive theory available on how to design the open loop optimal control problem solved in
each instance, in order to achieve desired closed loop properties. For details we refer to Chapters 1 and 2 in 
\cite{RMD}.

In many applications a control system has to deal with disturbances. Frequently the disturbances can be modelled in an additive manner, i.e. we have a control system of the form
\begin{equation*}
x_+ = f(x,u) + w,
\end{equation*}
where $w \in \mathbb{W} \subseteq \mathbb{R}^n$ is an external disturbance. The feedback stabilization problem
under constraints can now be adapted to this setting, resulting in robust feedback stabilization under constraints.
The goal is now to find a feedback that ensures constraint satisfaction and stabilizes the origin in a relaxed sense (which depends
on the size of the disturbance set $\mathbb{W}$). Furthermore, even in this more challenging situation one might have to deal
with additional cost criterions.

MPC can be adapted to the setting with disturbances. The key idea of most approaches is to solve a constrained 
open loop optimal control problem where the constraints are tightened. The intuition is that even the worst case disturbance cannot
throw the system out of the allowed state-input set. It is clear that this requires sufficiently small bounds on the size
of the disturbances. For more details we refer to Chapter 3 in \cite{RMD}.

\subsubsection{Details on the example} \label{sect.suppl.control_example}
The control example in the main text is from Section 6 in \cite{solopertoetal_learning_rmpc_gp}. It consists of the following system
\begin{equation} \label{eq.system} \small
	\begin{bmatrix}
		x_1^+ \\
		x_2^+
	\end{bmatrix}
	=
	\begin{bmatrix}
		0.995 & 0.095\\
		-0.095 & 0.900
	\end{bmatrix}
	\begin{bmatrix}
		x_1 \\
		x_2
	\end{bmatrix}
	+
	\begin{bmatrix}
		0.048 \\
		0.95
	\end{bmatrix}
	u
	+
	\begin{bmatrix}
		0 \\
		-r(x_2)
	\end{bmatrix}
\end{equation}
modelling a mass-spring-damper system with some nonlinearity $r$ (this could be interpreted as a friction term). 
As described in the main text, we replaced the Stribeck friction curve used by \cite{solopertoetal_learning_rmpc_gp} with a synthetic nonlinearity generated from a known RKHS. Furthermore, the nonlinearity is assumed to be unknown and has to be learned from data. 
The control goal is the stabilization of the origin subject to the state and control constraints $\mathbb{X}=[-10,10] \times [-10,10]$ and $\mathbb{U}=[-3,3]$, as well as minimizing a quadratic cost $\ell(x,u)=10\|x\| + \|u\|$.

The RMPC approach from\cite{solopertoetal_learning_rmpc_gp}  performs this task by interpreting \eqref{eq.system} 
as a linear system with disturbance, given by the nonlinearity $r$, whose graph is a-priori known to lie in the
set $\mathbb{W}_0 = [-10,10] \times [-7,7]$. 
The RMPC algorithm requires as an input disturbance sets $\mathbb{W}(x)$ such that $\begin{pmatrix}0 &-r(x_2)\end{pmatrix}^\top \in \mathbb{W}(x)$ for all $x \in \mathbb{X}$,
which are in turn used to generate tightened nominal constraints ensuring robust constraint satisfaction.
Furthermore, the tighter the sets $\mathbb{W}(x)$ are, the better is the performance of the algorithm, 
cf. Chapter 3 in \cite{RMD} for an in-depth discussion.

We now describe the learning part of this example in more detail: The nonlinearity $r$ (which will be our ground truth)
is sampled from the RKHS of the SE kernel
\begin{equation*}
k(x,x^\prime) = 4\exp \left( - \frac{(x-x^\prime)^2}{2 \times 0.8^2} \right)
\end{equation*}
with RKHS norm 2.
Following \cite{solopertoetal_learning_rmpc_gp}, we uniformly sample
100 partial states $x_2 \in [-10,10]$, evaluate $r$ at these and add i.i.d. Gaussian noise with a standard deviation of 0.01 to it.
The unknown function is then learned using GPR (using the nominal setting, i.e. with known $k$)
from this data set. Theorem 1 then leads to an uncertainty set of the form
$\mathbb{W}(x)=[\mu_{100}(x_2) - \beta_{100} \sigma_{100}(x_2), \mu_{100}(x_2) + \beta_{100} \sigma_{100}(x_2)],$
where we use $\delta:=0.001$. 
In particular, with probability at least $1-\delta$ we can guarantee that $r(x_2) \in \mathbb{W}(x)$ holds for all $x \in \mathbb{X}$. 
The situation is displayed in Figure \ref{fig.nonlinearity}.

\begin{figure} \label{fig.nonlinearity} \centering
\includegraphics[width=0.7\textwidth]{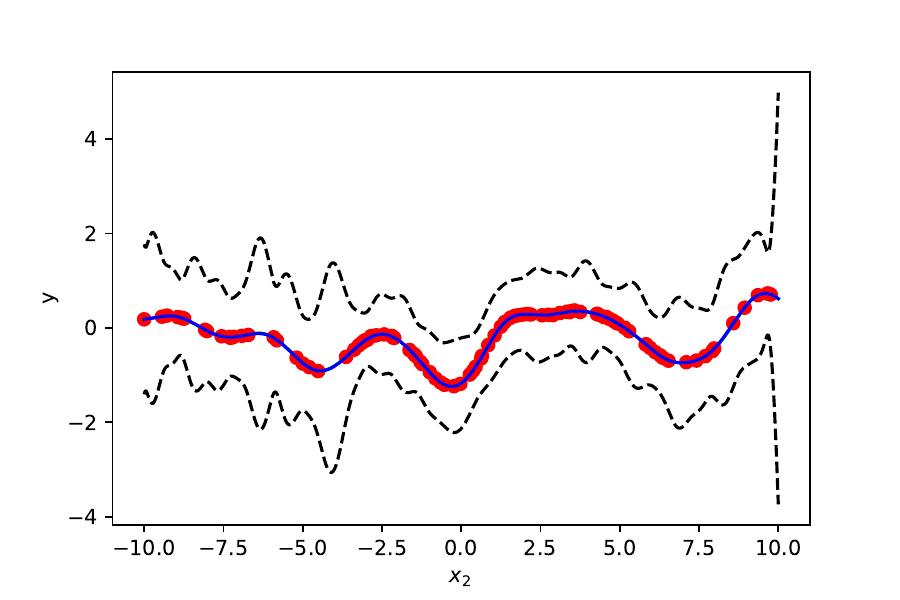}
\caption{Example nonlinearity. From the target function (blue solid line) 100 samples with noise (red dots) are sampled,
which are used to get the uncertainty sets (dashed black lines).}
\label{fig.nonlinearity}
\end{figure}

In order to follow \cite{solopertoetal_learning_rmpc_gp} as closely as possible, we exported the learning results and used the original Matlab
script to compute $\mathbb{Z}_k$ (provided by R. Soloperto). In order to save computation time, we decided to use a $50 \times 50$ state space grid and an MPC horizon of 9. 

The RMPC comes with deterministic guarantees. In particular, if the uncertainty is contained in the uncertainty sets, then
the RMPC controller ensures constrained satisfaction and convergence to a neighborhood of the origin, as well as a form of
Input-to-State stability, cf. Theorem 1 in \cite{solopertoetal_learning_rmpc_gp}. Since we can guarantee that the true uncertainty is covered
by the uncertainty sets with probability $1-\delta$ by Theorem 1, the deterministic guarantees hold with at least the same
probability.

\end{document}